\documentclass[11pt]{article}

\usepackage[margin=1in]{geometry}
\usepackage{natbib}

\usepackage{booktabs}
\usepackage{multirow}
\usepackage{amsmath}
\usepackage{amssymb}
\usepackage{amsthm}
\usepackage{graphicx}
\usepackage{algorithm}
\usepackage{algorithmic}
\usepackage{xspace}
\usepackage{subcaption}
\usepackage{enumitem}
\usepackage{tikz}
\usetikzlibrary{shapes.geometric}
\usepackage{hyperref}
\usepackage{xcolor}
\usepackage{url}

\hypersetup{
  pdftitle={GEODE: Angle-Adaptive OOD Detection with Universal Scorer Compatibility},
  pdfauthor={Bruno Abrahao},
  pdfsubject={Out-of-distribution detection, neural collapse, training-based methods},
  pdfkeywords={out-of-distribution detection, neural collapse, outlier exposure, OOD, machine learning},
  colorlinks=true,
  linkcolor=blue,
  citecolor=blue,
  urlcolor=blue
}

\newtheorem{theorem}{Theorem}

\newtheorem{proposition}[theorem]{Proposition}

\newcommand{\method}{GEODE\xspace}
\newcommand{\loss}{\mathcal{L}_{\mathrm{GEO}}}
\newcommand{\cosim}{\mathrm{cos}}
\newcommand{\norm}[1]{\lVert #1 \rVert}
\newcommand{\reals}{\mathbb{R}}
\newcommand{\expect}{\mathbb{E}}

\DeclareMathOperator{\nul}{\mathcal{N}}
\DeclareMathOperator{\col}{col}

\newenvironment{ack}{\section*{Acknowledgements}}{}

\title{GEODE: Angle-Adaptive OOD Detection with Universal Scorer Compatibility}

\author{
  Bruno Abrahao\\
  NYU Shanghai \\
  Leonard N.\ Stern School of Business, NYU \\
  \texttt{abrahao@nyu.edu}
}

\date{}

\begin{document}
\maketitle

\begin{abstract}
Outlier Exposure (OE) is among the strongest training-based OOD detectors on standard benchmarks but exhibits scorer-dependent tradeoffs (e.g., strong on MSP, weak on KNN) and requires curated auxiliary data. We show why OE works: its features sit at the same geometric locus as real near-OOD data, with the boundary-adjacent quartile driving nearly all of OE's gain. \textbf{OE is boundary calibration, not OOD coverage.} \method (GEOmetry-preserving DEtection) replicates this calibration synthetically through an angle-adaptive norm loss in which targets scale per-sample with cosine similarity to the nearest class mean, preserving feature geometry where boundary structure matters. Four theorems grounded in neural collapse justify the design. \method works across all seven standard scorers on CIFAR-10 (near-OOD AUROC $89.0$--$92.3$, far-OOD reaching $93.05$; no catastrophic failure on any scorer). Since the OOD regime is unknown at deployment, this is the test that matters. \method outperforms vanilla CE at matched epoch counts. Combined with OE, \method reaches $95.0$ MSP / $94.8$ KNN on CIFAR-10 and beats OE on every scorer on CIFAR-100. The gains hold on WRN-28-10 ($+4.5$ Energy, 3 seeds). Unlike methods that push OOD into the classifier null space (e.g., PFS, $14.38$ KNN AUROC, worse than random), \method's adaptive target preserves the geometry that distance-based scorers depend on.
\end{abstract}

\section{Introduction}
\label{sec:intro}

Out-of-distribution (OOD) detection flags inputs a deployed classifier was not trained on~\citep{hendrycks2017baseline, zhang2024openoodv15}. Post-hoc scoring functions, from logit-based methods like MSP~\citep{hendrycks2017baseline} and Energy~\citep{liu2020energy} to feature-space methods like KNN~\citep{sun2022knnood}, can be applied to any trained model, and training-based methods reshape features or logits to improve detection~\citep{hendrycks2019oe, wei2022logitnorm, du2022vos, ming2023cider, tao2023npos}. The two advances are in tension: training methods that boost logit-based scorers routinely degrade feature-space scorers, and vice versa. No existing training method achieves strong detection across all standard scorers \emph{and} across both near- and far-OOD regimes simultaneously. At deployment, where the OOD type is unknown a priori, this leaves practitioners exposed to scorer-and-regime mismatch.

\begin{figure*}[t]
\centering
\includegraphics[width=\textwidth]{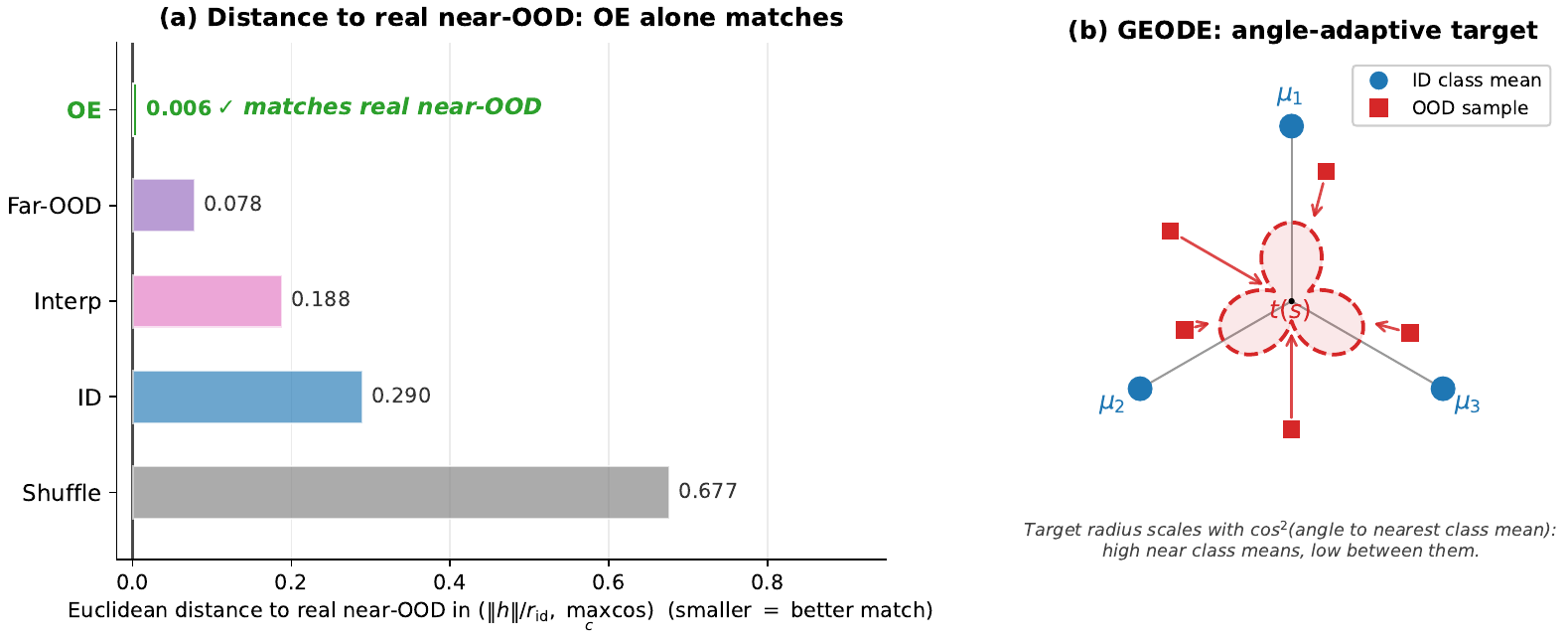}
\caption{%
\textbf{(a)}~Distance of each candidate source from real near-OOD geometry (ResNet-18, CIFAR-10), measured as Euclidean distance in the $(\norm{h}/r_{\mathrm{id}},\,\max_c\cosim)$ plane. \textbf{OE alone matches} (distance $0.006$; the next closest source is $13\times$ further). Including the projection ratio and perpendicular norm columns of Table~\ref{tab:oe_geometry} preserves the same ranking under any reasonable normalization. Candidate raw synthesis sources (Interp from prior work; Shuffle introduced here) miss the calibration zone in their unadjusted form; \method's adaptive target steers Shuffle features to the boundary calibration locus directly.
\textbf{(b)}~\method's angle-adaptive target locus traces a multi-petal pattern (dashed red): at each angle, the target radius $t(s) = s^2\alpha\,r_{\mathrm{id}}$ scales with $s = \max_c \cosim(h, \mu_c)$, the cosine similarity to the nearest class mean. Synthetic OOD samples (red squares) receive per-sample targets along this locus (red arrows): OOD features near a class mean are pulled to a relatively large radius (preserving local feature geometry that KNN-style scorers rely on), while OOD features far from any class are pulled toward the origin (sharpening logit-based separation). The per-sample, angle-coupled target is the central mechanism of \method.%
}
\label{fig:overview}
\end{figure*}

Outlier Exposure~\citep[OE;][]{hendrycks2019oe} remains the dominant training-based approach: exposing the model to real auxiliary OOD data outperforms methods that rely on synthetic outliers~\citep{zhang2024openoodv15}. But OE requires a curated auxiliary dataset, and its effectiveness depends on how well that dataset matches the true test-time OOD distribution, a requirement that is both labor-intensive and fragile. Synthetic alternatives (VOS, NPOS) improve specific logit-based scorers but fall short of OE's overall performance. PFS~\citep{wu2025pfs} pushes OOD features into the null space of the classifier, achieving strong logit scores ($93.75$ MSP) but catastrophically destroying KNN ($14.38$ AUROC, worse than random). We use PFS throughout as the named instance of Theorem~\ref{thm:nullspace_knn}, which characterizes this null-space failure mode. \emph{What geometric property of auxiliary data makes OE effective, and can we replicate it without real OOD data?}

In this paper, we answer this question and propose \method (GEOmetry-preserving DEtection). We begin with a geometric analysis of OE in the backbone's penultimate layer: OE features are geometrically indistinguishable from real near-OOD data, matching in norm ratio ($0.833$ vs $0.836$), maximum cosine similarity ($0.714$ vs $0.719$), and subspace projection ($0.952$ vs $0.953$). The boundary-adjacent quartile of OE drives almost all the gain ($94.45$ AUROC from less than half the examples). \textbf{OE is a boundary calibration mechanism}, not an OOD coverage mechanism. The geometric locus that makes OE effective is already present in any trained backbone's feature space. \method generates synthetic training signal at that locus. An angle-adaptive norm loss sets OOD norm targets proportional to cosine similarity to the nearest class mean to preserve boundary geometry; a contrastive term pushes ID norms up to widen the margin from both sides (Figure~\ref{fig:overview}). The result is a norm gap that supports \emph{every} standard scorer at competitive AUROC ($89.0$--$92.3$ across all seven on CIFAR-10 near-OOD) without catastrophic failure on any; the small KNN cost ($1.6$ points vs Vanilla) is a deliberate trade for gains on logit-side scorers (Section~\ref{sec:tradeoff_analysis}). Because \method operates in norm space rather than logit space, it is compatible with the natural convergence of neural collapse~\citep{papyan2020neural}: it outperforms vanilla CE at matched epoch counts, while LogitNorm, which modifies the logit geometry, interferes with NC convergence and degrades with longer training.

\paragraph{Contributions.}
\begin{enumerate}
    \item \textbf{Framing.} We present the first geometric characterization of why OE works (boundary calibration, not OOD coverage), diagnose PFS's catastrophic KNN failure ($14.38$ AUROC) via null-space analysis, and identify a distributional shortcut inflating OE's reported CIFAR-100 gains.
    \item \textbf{Method.} This analysis motivates \method, an angle-adaptive norm loss with a contrastive ID margin. Four theoretical results grounded in neural collapse justify the design: three establish that adaptive targeting preserves feature geometry (Theorems~\ref{thm:permutation_mean}--\ref{thm:metric}), and a fourth proves that the competing null-space approach destroys KNN (Theorem~\ref{thm:nullspace_knn}).
    \item \textbf{Empirical.} On CIFAR-10 (3 seeds), \method achieves $92.30 \pm 0.34$ near-OOD AUROC (EBO) and detects across all seven standard scorers (range $89.0$--$92.3$, no catastrophic failure on any), without auxiliary data. Combined with OE, \method reaches $95.0$ MSP / $94.8$ KNN on CIFAR-10 and beats OE on all three scorers (MSP, Energy, KNN) on CIFAR-100. \method outperforms PFS on every scorer across 3 datasets and vanilla CE at every matched epoch count; LogitNorm degrades with longer training, evidence of NC-compatible design.
\end{enumerate}

\section{Preliminaries}
\label{sec:prelim}

\paragraph{Problem setup.}
A neural network maps input $\mathbf{x} \in \reals^d$ to features $\mathbf{h} = g_\theta(\mathbf{x}) \in \reals^p$, followed by a linear classifier $W \in \reals^{C \times p}$ producing logits $\mathbf{z} = W\mathbf{h}$. Training minimizes cross-entropy on the ID dataset.

\paragraph{OOD scoring functions.}
At test time, a scoring function $S(\mathbf{x})$ separates ID from OOD inputs. Standard scorers fall into two geometric families, which we instantiate with three widely used representatives:
\begin{itemize}[nosep]
    \item \textbf{MSP}~\citep{hendrycks2017baseline}: $S_{\mathrm{MSP}}(\mathbf{x}) = \max_c \mathrm{softmax}(z_c)$. Operates on logits, like ODIN~\citep{liang2018enhancing}, ReAct~\citep{sun2021react}, ASH~\citep{djurisic2023ash}, and Scale~\citep{xu2024scale}.
    \item \textbf{Energy}~\citep{liu2020energy}: $S_{\mathrm{Energy}}(\mathbf{x}) = \log \sum_{c} \exp(z_c)$. Theoretically grounded as a free energy.
    \item \textbf{KNN}~\citep{sun2022knnood}: $S_{\mathrm{KNN}}(\mathbf{x}) = -\norm{\mathbf{h} - \mathbf{h}^{(k)}}$. Operates in feature space, like Mahalanobis~\citep{lee2018mahalanobis}.
\end{itemize}
Logit-based scorers depend on magnitude and direction of logits; feature-space scorers depend on local geometric structure. This distinction is central to the tradeoff we address.

\paragraph{Neural collapse and the ETF.}
In the terminal phase of training, \citet{papyan2020neural} observed that class means converge to a simplex equiangular tight frame (ETF): the $C$ class mean directions $\{\hat{\boldsymbol{\mu}}_c\}_{c=1}^C$ satisfy
\begin{equation}
    \hat{\boldsymbol{\mu}}_i^\top \hat{\boldsymbol{\mu}}_j = \frac{C\delta_{ij} - 1}{C - 1},
    \label{eq:etf}
\end{equation}
where $\hat{\boldsymbol{\mu}}_c = (\boldsymbol{\mu}_c - \boldsymbol{\mu}_G)/\norm{\boldsymbol{\mu}_c - \boldsymbol{\mu}_G}$ are the centered and normalized class means and $\boldsymbol{\mu}_G = \frac{1}{C}\sum_c \boldsymbol{\mu}_c$ is the global mean. Under neural collapse, classifier weight rows $\mathbf{w}_c$ align with class mean directions, so logits $z_c = \mathbf{w}_c^\top \mathbf{h}$ depend only on the projection of $\mathbf{h}$ onto the $C$-dimensional classifier subspace $\mathrm{col}(W^\top)$. Features in the $(p - C)$-dimensional null space of $W$ are invisible to any logit-derived scorer, yet carry the norm signal that feature-space scorers rely on.

\paragraph{Score-family tradeoff.}
We formalize the score-family tradeoff in Appendix~\ref{app:tradeoff_proposition} (Proposition~\ref{prop:tradeoff}): Sep-type scores (normalized-logit scorers) are provably near-chance on near-OOD under NC, while Radius scores fail on far-OOD by null-space concentration~\citep{wang2025impossibility}. No single standard scorer can simultaneously detect both regimes---the tradeoff we observe empirically and address with \method. Rather than designing a new composite score, \method imposes an angle-adaptive norm target during training, producing a universal norm gap at the decision boundary that benefits \emph{every} standard scorer. We synthesize OOD features by independently permuting each dimension across the batch (details in Appendix~\ref{app:permutation}).

\section{Method}
\label{sec:method}

\subsection{OE Geometric Analysis}
\label{sec:oe_analysis}

The geometric locus that makes OE effective is already present in any trained backbone's feature space. The question is whether we can identify it precisely enough to generate synthetic training signal there. We begin by characterizing the geometric properties of OE features in the backbone's penultimate layer. We extract 512-d features (ResNet-18) for five data sources through a vanilla CIFAR-10 backbone: ID training data, OE auxiliary data (TIN-597), real near-OOD test data, our feature shuffling, and cross-class interpolation~\citep{zhang2018mixup, verma2019manifoldmixup, zhang2023mixoe, wang2025bootood} as a comparator. For each feature $\mathbf{h}$, we compute: centered norm $\norm{\mathbf{h} - \boldsymbol{\mu}_G}$, maximum cosine similarity to any class mean $s = \max_c \cosim(\mathbf{h}, \boldsymbol{\mu}_c)$, and projection onto the ID subspace.

\paragraph{OE $\approx$ near-OOD geometrically.}
Table~\ref{tab:oe_geometry} reveals that OE features are geometrically indistinguishable from real near-OOD: centered norm ($4.27$ vs $4.29$), max cosine ($0.714$ vs $0.719$), and projection ratio ($0.952$ vs $0.953$) all agree to within $0.5\%$.

\begin{table}[t]
\centering
\caption{Geometric characterization of feature sources in vanilla ResNet-18 feature space (CIFAR-10). OE is geometrically indistinguishable from real near-OOD. \method's feature shuffling produces features in a distinct geometric region; cross-class interpolation is included for comparison. Although shuffle starts far from OE, the adaptive target (Section~\ref{sec:adaptive}) steers each sample to a per-sample point on the OE-equivalent calibration locus, decoupling the synthesis source from the training-time geometry.}
\label{tab:oe_geometry}
\small
\begin{tabular}{@{}l ccccc@{}}
\toprule
Source & Norm$/r_{\mathrm{ref}}$ & Max cos & Proj.\ ratio & Perp norm \\
\midrule
ID train       & 1.000 & 0.958 & 0.981 & 0.96 \\
\textbf{OE (TIN-597)} & \textbf{0.833} & \textbf{0.714} & \textbf{0.952} & \textbf{1.27} \\
\textbf{Near-OOD}     & \textbf{0.836} & \textbf{0.719} & \textbf{0.953} & \textbf{1.24} \\
Far-OOD        & 0.761 & 0.699 & 0.949 & 1.37 \\
Shuffle (ours) & 1.100 & 0.096 & 0.171 & 5.55 \\
Interp.        & 0.859 & 0.906 & 0.980 & 0.84 \\
\bottomrule
\end{tabular}
\end{table}

\paragraph{Which OE examples are most effective?}
We partition OE into cosine-similarity bins and train OE separately with each subset (bin counts are unequal because cosine values are not uniformly distributed across OE). Table~\ref{tab:oe_quartile} shows that the \emph{ID-like} examples (cosine $> 0.85$) are the most effective, achieving $94.45$ near-OOD AUROC with less than half the data, while \emph{moderate} examples (cosine $0.5$--$0.7$) barely improve over vanilla ($88.82$ vs $88.03$). \textbf{OE is a boundary calibration mechanism}: the examples closest to the ID decision boundary provide the strongest training signal.

\begin{table}[t]
\centering
\caption{OE cosine-bin ablation on CIFAR-10. Bin sizes are unequal because OE cosine values are not uniformly distributed. Training OE with only the ID-like bin (cos $> 0.85$, $13{,}035$ samples) nearly matches full OE. Far-from-ID examples barely help. The effect cannot fully isolate cosine regime from sample size; we report it as qualitative evidence for the boundary-calibration hypothesis.}
\label{tab:oe_quartile}
\small
\begin{tabular}{@{}l cccccc@{}}
\toprule
OE subset & Cos range & $N$ & AUROC$\uparrow$ & FPR$\downarrow$ & ID-ACC \\
\midrule
No OE (vanilla) & --- & 0 & 88.03 & 48.18 & 95.22 \\
Q2 (moderate) & $[0.5, 0.7)$ & 1,289 & 88.82 & 38.83 & 87.93 \\
Q3 (near-OOD) & $[0.7, 0.85)$ & 15,526 & 93.99 & 23.96 & 93.68 \\
\textbf{Q4 (ID-like)} & $\mathbf{[0.85, 1.0)}$ & \textbf{13,035} & \textbf{94.45} & \textbf{20.36} & \textbf{94.41} \\
Full OE & $[0, 1.0)$ & 29,850 & 94.14 & 19.65 & 94.24 \\
\bottomrule
\end{tabular}
\end{table}

\subsection{Angle-Adaptive Norm Loss}
\label{sec:adaptive}

If effective OOD supervision creates a norm gap at the ID boundary, we can design a training loss that achieves this synthetically. Following \citet{wang2025bootood}, we maintain exponential moving averages (EMA) of the class means $\{\boldsymbol{\mu}_c\}_{c=1}^{C}$ and the average ID feature norm $r_{\mathrm{id}}$ throughout training:
\begin{align}
    \boldsymbol{\mu}_c &\leftarrow \beta \boldsymbol{\mu}_c + (1 - \beta) \bar{\mathbf{h}}_c, \label{eq:ema_mean} \\
    r_{\mathrm{id}} &\leftarrow \beta \, r_{\mathrm{id}} + (1 - \beta) \overline{\norm{\mathbf{h}}}, \label{eq:ema_norm}
\end{align}
where $\beta = 0.99$ is the EMA momentum.

\paragraph{Adaptive target norm.}
For each synthetic OOD feature $\mathbf{h}^{\mathrm{ood}}$, we compute its maximum cosine similarity to any class mean:
\begin{equation}
    s = \max_{c \in \{1, \ldots, C\}} \cosim\!\left(\mathbf{h}^{\mathrm{ood}}, \boldsymbol{\mu}_c\right).
    \label{eq:max_cos}
\end{equation}
The target norm is then:
\begin{equation}
    t = s^2 \cdot \alpha \cdot r_{\mathrm{id}},
    \label{eq:target}
\end{equation}
where $\alpha \in (0, 1]$ controls the maximum relaxation (default $\alpha = 0.2$). When $s \approx 1$ (near a class mean), $t \approx \alpha \cdot r_{\mathrm{id}}$: the feature retains its position near the class boundary. When $s \approx 0$ (far from all classes), $t \approx 0$: the feature is pushed to the origin for maximum logit-based discrimination.

\paragraph{Contrastive ID norm encouragement.}
To widen the ID--OOD margin from both sides, we add a contrastive term that pushes ID norms \emph{up}:
\begin{equation}
    \mathcal{L}_{\mathrm{id}} = \frac{1}{B} \sum_{i=1}^{B} \max\!\left(0,\; r_{\mathrm{id}} - \norm{\mathbf{h}_i}\right)^2,
    \label{eq:id_norm}
\end{equation}
which penalizes ID features only when their norm falls below $r_{\mathrm{id}}$ (hinge loss). The OE geometric analysis above motivates this two-sided margin: the most effective OOD training signal comes from features near the ID boundary, and a wider norm gap at that boundary improves detection across all scorer families.

\paragraph{Full training objective.}
The \method loss combines cross-entropy, the adaptive OOD norm penalty, and the contrastive ID margin:
\begin{equation}
    \loss = \mathcal{L}_{\mathrm{CE}}(\mathbf{z}, y) + \mathcal{L}_{\mathrm{ood}} + \lambda_{\mathrm{id}} \cdot \mathcal{L}_{\mathrm{id}},
    \label{eq:full_loss}
\end{equation}
where $\mathcal{L}_{\mathrm{ood}} = \frac{1}{B'} \sum_{j=1}^{B'} (\norm{\mathbf{h}_j^{\mathrm{ood}}} - t_j)^2$ is the adaptive norm penalty over $B'$ synthetic OOD features (target $t_j$ from Eq.~\ref{eq:target}) and $\mathcal{L}_{\mathrm{id}}$ is the contrastive hinge of Eq.~\ref{eq:id_norm}. We use $\lambda_{\mathrm{id}} = 0.5$ on CIFAR-10 and $\lambda_{\mathrm{id}} = 0$ on CIFAR-100 and ImageNet-200, where the weaker neural collapse geometry cannot support the contrastive margin (Appendix~\ref{app:contrastive_c100}). The full training procedure (Algorithm~\ref{alg:geode}), implementation details, and computational cost analysis are in Appendix~\ref{app:training}.

\subsection{Theoretical Justification}
\label{sec:theory}

Four results grounded in neural collapse justify the \method design. All proofs are in Appendix~\ref{app:proofs}.

\begin{theorem}[Shuffled Feature Mean under Neural Collapse]
\label{thm:permutation_mean}
Let $\{\boldsymbol{\mu}_c\}_{c=1}^C$ be class means forming a simplex ETF centered at global mean $\boldsymbol{\mu}_G = \frac{1}{C}\sum_{c=1}^C \boldsymbol{\mu}_c$. Let $\mathbf{h}^{\mathrm{ood}}$ be obtained by independent dimension-wise permutation of a balanced batch of ID features drawn from all $C$ classes. Then:
\begin{enumerate}[nosep]
    \item $\expect[\mathbf{h}^{\mathrm{ood}}] = \boldsymbol{\mu}_G$, and
    \item $\boldsymbol{\mu}_G \perp (\boldsymbol{\mu}_c - \boldsymbol{\mu}_G)$ for all $c \in \{1,\ldots,C\}$, i.e.\ $\cosim(\boldsymbol{\mu}_G,\, \boldsymbol{\mu}_c - \boldsymbol{\mu}_G) = 0$.
\end{enumerate}
\end{theorem}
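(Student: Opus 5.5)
The plan is to treat the two claims separately: the first is a linearity-of-expectation computation about the shuffling operator, and the second is a purely geometric identity about the class means.

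For part~1, I will formalize the synthesis of Appendix~\ref{app:permutation} as follows. A balanced batch $\{\mathbf{h}_i\}_{i=1}^{B}$ contains $B/C$ features from each class. For each coordinate $k \in \{1,\dots,p\}$, an independent uniformly random permutation $\pi_k$ of $\{1,\dots,B\}$ is drawn, and the $j$-th synthetic feature is set to $h^{\mathrm{ood}}_{j,k} = h_{\pi_k(j),k}$.

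Since $\pi_k(j)$ is uniform on $\{1,\dots,B\}$ for every fixed $j$, conditioning on the batch gives $\expect[h^{\mathrm{ood}}_{j,k}\mid \text{batch}] = \frac1B\sum_i h_{i,k}$. Because the permutations act coordinate-wise, this holds for every $k$ at once, so $\expect[\mathbf{h}^{\mathrm{ood}}\mid\text{batch}]$ equals the batch mean. Grouping the batch mean by class and using balance, it equals $\frac1C\sum_c \bar{\mathbf{h}}_c$, the average of the per-class batch means. Taking the outer expectation with $\expect[\bar{\mathbf{h}}_c]=\boldsymbol{\mu}_c$ then yields $\expect[\mathbf{h}^{\mathrm{ood}}]=\frac1C\sum_c\boldsymbol{\mu}_c=\boldsymbol{\mu}_G$. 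Independence across coordinates is not needed for the mean; it would only matter for higher moments.

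For part~2, I first note that the centered means always sum to zero: $\sum_c(\boldsymbol{\mu}_c-\boldsymbol{\mu}_G)=0$. Hence the average of the inner products $a_c := \boldsymbol{\mu}_G^\top(\boldsymbol{\mu}_c-\boldsymbol{\mu}_G)$ over $c$ is zero. It therefore suffices to show that $a_c$ does not depend on $c$.

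To get this, I expand
\[
\norm{\boldsymbol{\mu}_c}^2=\norm{\boldsymbol{\mu}_G}^2+2a_c+\norm{\boldsymbol{\mu}_c-\boldsymbol{\mu}_G}^2 .
\]
The simplex ETF structure (NC2), in its standard form from \citet{papyan2020neural}, includes equal centered norms $\norm{\boldsymbol{\mu}_c-\boldsymbol{\mu}_G}$. Given that, $a_c$ is constant across classes if and only if the raw norms $\norm{\boldsymbol{\mu}_c}$ are equal. Once $a_c$ is constant and averages to zero, every $a_c$ is zero, which is exactly $\boldsymbol{\mu}_G\perp(\boldsymbol{\mu}_c-\boldsymbol{\mu}_G)$ and hence the vanishing cosine. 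The degenerate case $\boldsymbol{\mu}_G=0$ makes the claim trivial, with the cosine taken as $0$ by convention.

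The main obstacle is that Eq.~\eqref{eq:etf} constrains only the centered, normalized directions. It does not by itself imply either equal centered norms or equal raw norms, and without some such condition part~2 is false: shifting one class mean along $\boldsymbol{\mu}_G$ breaks orthogonality while leaving the directions intact. I would therefore state the neural-collapse hypothesis explicitly as including equinorm of both the centered means and the uncentered means. Equivalently, the origin lies on the axis through $\boldsymbol{\mu}_G$ orthogonal to the affine span of the simplex, which is the natural configuration when every class mean shares the same global offset $\boldsymbol{\mu}_G$.

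I would justify this assumption by the EMA norm statistic $r_{\mathrm{id}}$ of Eq.~\eqref{eq:ema_norm} being class-agnostic under collapse. I would then present part~2 as a consequence of this equinorm condition together with the zero-sum identity above.
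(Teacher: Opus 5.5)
Your part~1 is the paper's argument made exact. The paper substitutes $\mathbf{h}_i \approx \boldsymbol{\mu}_{y_i}$ and reads off the uniform marginal of $\pi_k(j)$. You instead condition on the batch and need only $\expect[\bar{\mathbf{h}}_c] = \boldsymbol{\mu}_c$, so no within-class collapse is used.

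For part~2 you take a different route from the paper, and yours is the one that actually works. The paper expands $\boldsymbol{\mu}_G^\top\boldsymbol{\mu}_c = \frac{1}{C}\sum_{c'}\boldsymbol{\mu}_{c'}^\top\boldsymbol{\mu}_c$ using $\boldsymbol{\mu}_{c'} = \tilde{\boldsymbol{\mu}}_{c'} + \boldsymbol{\mu}_G$ and the ETF inner products. It then drops the cross term $\boldsymbol{\mu}_G^\top\tilde{\boldsymbol{\mu}}_c$ that appears once $\boldsymbol{\mu}_c$ is also written as $\tilde{\boldsymbol{\mu}}_c + \boldsymbol{\mu}_G$. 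With that term restored, the computation reduces to the tautology $\boldsymbol{\mu}_G^\top\tilde{\boldsymbol{\mu}}_c = \boldsymbol{\mu}_G^\top\tilde{\boldsymbol{\mu}}_c$.

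Your diagnosis is right. The ETF hypothesis is invariant under translating all class means by a common vector, while the conclusion is not, so an extra hypothesis is unavoidable. The paper in fact asserts yours in its closing remark (``equal class-mean norms under the ETF''), although it does not follow from the ETF condition. Since $a_c = \tfrac12\bigl(\norm{\boldsymbol{\mu}_c}^2 - \norm{\boldsymbol{\mu}_c - \boldsymbol{\mu}_G}^2 - \norm{\boldsymbol{\mu}_G}^2\bigr)$ and $\sum_c a_c = 0$, the exact condition is that $\norm{\boldsymbol{\mu}_c}^2 - \norm{\boldsymbol{\mu}_c - \boldsymbol{\mu}_G}^2$ be independent of $c$. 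Your double equinorm is the natural sufficient case, and with it your zero-sum argument is a complete proof.

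Two small repairs are needed.

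\textbf{The counterexample.} Yours only works for $C = 2$. Start from a configuration satisfying the conclusion and move $\boldsymbol{\mu}_1$ by $\epsilon\,\boldsymbol{\mu}_G/\norm{\boldsymbol{\mu}_G}$. The new centered mean of class~1 then has squared norm larger by $\epsilon^2(1-1/C)^2$, while every other centered mean grows by only $\epsilon^2/C^2$. So for $C \geq 3$ the ETF hypothesis is lost too. Use a common translation instead: for any simplex ETF $\{\tilde{\boldsymbol{\mu}}_c\}$ (which sums to $\mathbf{0}$), set $\boldsymbol{\mu}_c := \tilde{\boldsymbol{\mu}}_c + \tilde{\boldsymbol{\mu}}_1$. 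The centered means are then exactly the $\tilde{\boldsymbol{\mu}}_c$, $\boldsymbol{\mu}_G = \tilde{\boldsymbol{\mu}}_1$, and $\cosim(\boldsymbol{\mu}_G, \boldsymbol{\mu}_1 - \boldsymbol{\mu}_G) = 1$.

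\textbf{The appeal to $r_{\mathrm{id}}$.} Drop it. A single batch-averaged norm carries no information about per-class mean norms, so uncentered equinorm should simply be stated as an added hypothesis of the theorem.
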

\noindent\textit{Intuition:} Shuffled outliers concentrate at the global mean. Because $\boldsymbol{\mu}_G$ is orthogonal to every centered class direction, its cosine similarity to the nearest class mean is small (vanishing as $C\to\infty$), justifying near-zero norm targeting on average; individual shuffled samples nevertheless vary widely in cosine similarity to the nearest class mean, motivating the per-sample adaptive target.

\begin{theorem}[Geometry Preservation]
\label{thm:geometry}
Let $\mathbf{h}^{\mathrm{ood}} \in \reals^p$ be a synthetic OOD feature with cosine similarity $s = \max_c \cosim(\mathbf{h}^{\mathrm{ood}}, \boldsymbol{\mu}_c)$ to the nearest class mean, and let $r_{\mathrm{id}} > 0$ be the average ID feature norm. For any $\alpha \in (0,1)$ and any feature with $s > 0$ and $\norm{\mathbf{h}^{\mathrm{ood}}} \geq s^2 \cdot \alpha \cdot r_{\mathrm{id}}$:
\begin{equation}
    \Delta_{\mathrm{adap}} = \left(\norm{\mathbf{h}^{\mathrm{ood}}} - s^2 \alpha r_{\mathrm{id}}\right)^2 < \norm{\mathbf{h}^{\mathrm{ood}}}^2 = \Delta_{\mathrm{unif}}.
\end{equation}
The adaptive target preserves $O(s^2 \cdot \alpha \cdot r_{\mathrm{id}})$ of the original norm for near-class features ($s \to 1$), while recovering the uniform penalty for far-from-class features ($s \to 0$).
\end{theorem}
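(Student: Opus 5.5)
The plan is to treat this as a one-variable inequality. Set $n = \norm{\mathbf{h}^{\mathrm{ood}}}$ and $t = s^2 \alpha r_{\mathrm{id}}$. Since $s > 0$, $\alpha > 0$ and $r_{\mathrm{id}} > 0$, we have $t > 0$, and the hypothesis is $n \geq t$. Then $n \geq t > 0$, so $n > 0$. I will expand the difference of the two penalties directly:
\begin{equation*}
\Delta_{\mathrm{unif}} - \Delta_{\mathrm{adap}} = n^2 - (n - t)^2 = t\,(2n - t).
\end{equation*}
The first factor is strictly positive. Because $n \geq t$, the second factor satisfies $2n - t \geq n > 0$. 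The product is therefore strictly positive, which gives the strict inequality $\Delta_{\mathrm{adap}} < \Delta_{\mathrm{unif}}$. Equivalently, $0 \leq n - t < n$, and squaring preserves order on nonnegative reals. The condition $\alpha < 1$ is not needed for the inequality itself. It only guarantees $t < r_{\mathrm{id}}$, so that the target stays strictly inside the ID radius; I will note this.

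For the qualitative claims, I will read them as statements about the minimizer of the per-sample penalty. The adaptive penalty $(n - t)^2$ is minimized at $n = t = s^2 \alpha r_{\mathrm{id}}$. So for $s \to 1$ the retained norm tends to $\alpha r_{\mathrm{id}}$, which is $O(s^2 \alpha r_{\mathrm{id}})$. For $s \to 0$, $t \to 0$ and $\Delta_{\mathrm{adap}} \to n^2 = \Delta_{\mathrm{unif}}$ by continuity, so the uniform penalty is recovered.

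I would optionally add a remark on gradients. The gradient with respect to $n$ is $2(n - t)$ versus $2n$, so the adaptive pull toward the origin is smaller by exactly $2t$. This gives the "geometry preservation" interpretation: norm shrinkage is reduced most for near-class features.

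The only real obstacle is making precise what "preserves $O(s^2\alpha r_{\mathrm{id}})$ of the original norm" means. I would formalize it as the stationary norm of the adaptive penalty equalling $t$, in contrast to $0$ under the uniform penalty, since the inequality itself is elementary.
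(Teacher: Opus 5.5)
Your proposal is correct and follows the paper's proof. The paper likewise reduces the inequality to $t(t-2n)<0$ (in your notation) and closes it with your observation that $n \geq t$ gives $t-2n \leq -t < 0$, after first offering a looser heuristic via $t < r_{\mathrm{id}} \leq 2n$; it then reads the preservation claims off the stationary point $\norm{\mathbf{h}^{\mathrm{ood}}} \to t = s^2\alpha r_{\mathrm{id}}$ of the penalty, as you do, and your remark that $\alpha<1$ is not needed for the inequality itself is accurate.
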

\noindent\textit{Intuition:} The adaptive penalty produces strictly less geometric distortion than the uniform alternative, preserving the boundary region that KNN relies on.

\begin{theorem}[Metric Preservation under Adaptive Targeting]
\label{thm:metric}
Under the adaptive target, the pairwise distance distortion between an ID feature and a synthetic OOD feature is bounded by $r_0 - s^2 \alpha\, r_{\mathrm{id}}$, compared to distortion $r_0$ under the uniform target. For near-boundary features ($s \to 1$, $r_0 \approx r_{\mathrm{id}}$), this reduces distortion by a factor of $(1 - \alpha)$.
\end{theorem}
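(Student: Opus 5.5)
The plan is to formalize the informal statement first, then prove it with the triangle inequality. Let $\mathbf{h}^{\mathrm{ood}}$ have norm $r_0 = \norm{\mathbf{h}^{\mathrm{ood}}}$ and direction $\hat{\mathbf{u}} = \mathbf{h}^{\mathrm{ood}}/r_0$. The norm penalty only acts radially, so I model its fixed point as the rescaled feature $\tilde{\mathbf{h}} = t\,\hat{\mathbf{u}}$. Under the adaptive target this gives $t = s^2\alpha r_{\mathrm{id}}$ (Eq.~\ref{eq:target}), and under the uniform target $t = 0$. For an ID feature $\mathbf{h}^{\mathrm{id}}$, which is left unchanged, I define the distortion of the pair as $\bigl|\norm{\mathbf{h}^{\mathrm{id}} - \tilde{\mathbf{h}}} - \norm{\mathbf{h}^{\mathrm{id}} - \mathbf{h}^{\mathrm{ood}}}\bigr|$. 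This reduces the theorem to a statement about how far the OOD point moves.

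\textbf{Step 1.} By the reverse triangle inequality, the distortion is at most $\norm{\tilde{\mathbf{h}} - \mathbf{h}^{\mathrm{ood}}}$. Since $\tilde{\mathbf{h}}$ and $\mathbf{h}^{\mathrm{ood}}$ are collinear, this displacement equals $|r_0 - t|$. Under the hypothesis of Theorem~\ref{thm:geometry}, $r_0 \ge s^2\alpha r_{\mathrm{id}}$, so the bound is exactly $r_0 - s^2\alpha r_{\mathrm{id}}$. Setting $t=0$ gives the uniform bound $r_0$. The uniform bound is attained when $\mathbf{h}^{\mathrm{id}}$ lies on the ray opposite $\hat{\mathbf{u}}$, so "distortion $r_0$" is tight in the worst case. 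The adaptive bound is strictly smaller whenever $s>0$, which is consistent with Theorem~\ref{thm:geometry}.

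\textbf{Step 2.} For the near-boundary regime, substitute $s\to1$ and $r_0\approx r_{\mathrm{id}}$. The adaptive bound becomes $r_{\mathrm{id}}(1-\alpha)$ and the uniform bound becomes $r_{\mathrm{id}}$, so their ratio is $(1-\alpha)$. I would state this as a limit: the ratio $(r_0 - s^2\alpha r_{\mathrm{id}})/r_0$ is continuous in $(s, r_0/r_{\mathrm{id}})$ and tends to $1-\alpha$.

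\textbf{Main obstacle.} The hard part is justifying the modeling step: that the loss changes only the norm of the OOD feature and not its direction, and that ID features are unaffected. I would state this explicitly as an idealization, taking the minimizer of $(\norm{\mathbf{h}}-t)^2$ over the ray through $\mathbf{h}^{\mathrm{ood}}$. Its gradient is radial, $2(\norm{\mathbf{h}}-t)\hat{\mathbf{u}}$, so gradient flow on this term alone preserves direction and hence $s$. Making this assumption explicit keeps the bound rigorous within the stated model, and everything else is elementary.
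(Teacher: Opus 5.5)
Your proposal is correct and follows the paper's proof essentially step for step. The reverse triangle inequality bounds the distortion by the collinear displacement $|r_0 - t|$, setting $t=0$ gives the uniform bound $r_0$, and the ratio $1 - s^2\alpha r_{\mathrm{id}}/r_0$ tends to $1-\alpha$; the paper witnesses tightness with $\mathbf{h}^{\mathrm{ood}}$ on the segment from $\mathbf{0}$ to $\mathbf{h}_{\mathrm{ID}}$ rather than on your opposite ray, and both configurations attain both bounds simultaneously.

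Your explicit hypothesis $r_0 \geq s^2\alpha r_{\mathrm{id}}$ is cleaner than the paper's ``$r_0 \geq t$ for $\alpha < 1$''. One caveat: your side remark that the gradient is purely radial holds only if $t$ is treated as a constant. Differentiating through $s$ yields the tangential term $\mathbf{g}_{\mathrm{tan}}$ in the paper's gradient analysis, which vanishes near class means ($s \to 1$) but not in general. So keep the direction-preserving ray minimizer as a stated idealization, exactly as the paper does with ``direction preserved''.
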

\noindent\textit{Intuition:} Adaptive targeting preserves local pairwise distances near class boundaries, which is why KNN detection performance is maintained.

\begin{theorem}[Null-Space Push Eliminates Class-Conditional Distance Variation]
\label{thm:nullspace_knn}
Let $\mathbf{h}_{\mathrm{ID}} \in \col(W^\top)$ and $\mathbf{h}^{\mathrm{ood}} \in \nul(W)$ with norms $r_{\mathrm{id}}$ and $r_{\mathrm{ood}}$. Then $d(\mathbf{h}_{\mathrm{ID}},\, \mathbf{h}^{\mathrm{ood}}) = \sqrt{r_{\mathrm{id}}^2 + r_{\mathrm{ood}}^2}$, independent of class identity and angular position. If $r_{\mathrm{ood}} \approx r_{\mathrm{id}}$, all ID--OOD distances concentrate at $\sqrt{2}\, r_{\mathrm{id}}$, removing the class-conditional distance variation that KNN exploits to discriminate OOD from ID.
\end{theorem}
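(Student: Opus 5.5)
The argument is a Pythagorean computation driven by the orthogonal decomposition $\reals^p = \col(W^\top) \oplus \nul(W)$. I will first record this decomposition: it is the fundamental theorem of linear algebra, $\nul(W) = \col(W^\top)^\perp$. Then, for any $\mathbf{u} \in \col(W^\top)$, we can write $\mathbf{u} = W^\top \mathbf{a}$, and for any $\mathbf{v} \in \nul(W)$ we have $\mathbf{u}^\top \mathbf{v} = \mathbf{a}^\top W\mathbf{v} = 0$. Applied to $\mathbf{h}_{\mathrm{ID}}$ and $\mathbf{h}^{\mathrm{ood}}$, this gives $\mathbf{h}_{\mathrm{ID}}^\top \mathbf{h}^{\mathrm{ood}} = 0$.

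Next I expand the squared Euclidean distance:
\begin{equation*}
d(\mathbf{h}_{\mathrm{ID}}, \mathbf{h}^{\mathrm{ood}})^2 = \norm{\mathbf{h}_{\mathrm{ID}}}^2 - 2\,\mathbf{h}_{\mathrm{ID}}^\top \mathbf{h}^{\mathrm{ood}} + \norm{\mathbf{h}^{\mathrm{ood}}}^2 = r_{\mathrm{id}}^2 + r_{\mathrm{ood}}^2 .
\end{equation*}
Taking square roots yields the stated formula. The right-hand side depends only on the two norms. It therefore does not depend on which class $\mathbf{h}_{\mathrm{ID}}$ belongs to, on its direction within $\col(W^\top)$ (for instance its alignment with any $\hat{\boldsymbol{\mu}}_c$), or on the direction of $\mathbf{h}^{\mathrm{ood}}$ inside $\nul(W)$. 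This gives the independence claim. Setting $r_{\mathrm{ood}} = r_{\mathrm{id}}$ gives exactly $\sqrt{2}\,r_{\mathrm{id}}$. For the approximate version I will note that $(r_{\mathrm{id}}, r_{\mathrm{ood}}) \mapsto \sqrt{r_{\mathrm{id}}^2 + r_{\mathrm{ood}}^2}$ is $1$-Lipschitz in $r_{\mathrm{ood}}$. Hence, if $|r_{\mathrm{ood}} - r_{\mathrm{id}}| \le \epsilon$, every ID--OOD distance lies within $\epsilon$ of $\sqrt{2}\,r_{\mathrm{id}}$.

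The final clause, that this removes the variation KNN exploits, is interpretive. I will make it precise as follows. Under neural collapse, all ID features with norm $r_{\mathrm{id}}$ are equidistant from every null-space OOD feature. Consequently the $k$-th nearest ID neighbour distance of $\mathbf{h}^{\mathrm{ood}}$ equals $\sqrt{r_{\mathrm{id}}^2 + r_{\mathrm{ood}}^2}$ no matter which training points are selected, so $S_{\mathrm{KNN}}(\mathbf{h}^{\mathrm{ood}})$ carries no class or angular information. Meanwhile, an ID test point $\mathbf{h}$ of class $c$ lies at distance $\norm{\mathbf{h} - \boldsymbol{\mu}_c}$, possibly small, from same-class neighbours. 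It lies at distance about $r_{\mathrm{id}}\sqrt{2 - 2\hat{\boldsymbol{\mu}}_i^\top\hat{\boldsymbol{\mu}}_j} = r_{\mathrm{id}}\sqrt{2C/(C-1)}$ from other-class neighbours, using Eq.~\eqref{eq:etf}. It is exactly this class-conditional gap that is absent for null-space OOD features.

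The main obstacle is the mismatch between the idealized hypothesis $\mathbf{h}_{\mathrm{ID}} \in \col(W^\top)$ and reality. Actual ID features have a small null-space component, as the projection ratio $0.981$ in Table~\ref{tab:oe_geometry} shows. I will handle this by decomposing $\mathbf{h}_{\mathrm{ID}} = \mathbf{h}_\parallel + \mathbf{h}_\perp$. The cross term then becomes $\mathbf{h}_\perp^\top \mathbf{h}^{\mathrm{ood}}$, which Cauchy--Schwarz bounds by $\norm{\mathbf{h}_\perp}\, r_{\mathrm{ood}}$. The concentration statement therefore holds up to an additive $O(\norm{\mathbf{h}_\perp})$ error. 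The theorem itself, as stated, needs only the exact orthogonality argument above.
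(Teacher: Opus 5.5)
Your proof is correct and follows the paper's argument: orthogonality of $\col(W^\top)$ and $\nul(W)$ (which you justify explicitly via $\mathbf{u}=W^\top\mathbf{a}$), then the Pythagorean identity, then the observation that the result depends only on the two norms; your $1$-Lipschitz bound is a modest sharpening of the ``$r_{\mathrm{ood}}\approx r_{\mathrm{id}}$'' clause, which the paper leaves informal. One caveat on your robustness remark: the $0.981$ in Table~\ref{tab:oe_geometry} is a projection ratio onto the ``ID subspace'' of Section~\ref{sec:oe_analysis}, not the share of energy in $\col(W^\top)$, and Table~\ref{tab:geometry_decomp} puts that share at only $72.2\%$ (vanilla) and $60.7\%$ (PFS) for ID features, so $\norm{\mathbf{h}_\perp} > r_{\mathrm{id}}/2$ and the worst-case Cauchy--Schwarz error is of the same order as $r_{\mathrm{id}}^2$ itself---concentration there would instead require the null-space parts of $\mathbf{h}_{\mathrm{ID}}$ and $\mathbf{h}^{\mathrm{ood}}$ to be nearly orthogonal, not $\mathbf{h}_\perp$ to be small.
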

\noindent\textit{Intuition:} Methods like PFS that push OOD into the null space of $W$ collapse the class-conditional structure of ID--OOD distances. Whether KNN actually fails empirically also depends on the within-class tightness of the ID training-set neighborhoods; Table~\ref{tab:geometry_decomp} confirms that the predicted condition holds and that KNN does collapse in practice (PFS retains only $7.3\%$ of OOD energy in $\mathrm{col}(W^\top)$, yielding KNN AUROC of $14.38$).

\section{Experiments}
\label{sec:experiments}

\subsection{Setup}
\label{sec:setup}

We evaluate on the OpenOOD v1.5 benchmark \citep{zhang2024openoodv15} with CIFAR-10, CIFAR-100 \citep{krizhevsky2009cifar}, and ImageNet-200 \citep{zhang2024openoodv15} as ID datasets. We use ResNet-18 \citep{he2016resnet} as the primary backbone (WRN-28-10 and ViT-B/16 in Appendix~\ref{app:backbones}). We report AUROC ($\uparrow$) and FPR@95 ($\downarrow$) with MSP, Energy, ODIN, and KNN scorers. Results are averaged over 3 seeds where indicated ($\pm$ std); remaining entries are single-seed. \method uses default hyperparameters: $\rho = 0.05$, $\alpha = 0.2$, and $\lambda_{\mathrm{id}} = 0.5$ on CIFAR-10 ($\lambda_{\mathrm{id}} = 0$ on CIFAR-100 and ImageNet-200; see Appendix~\ref{app:contrastive_c100}). Baselines cover the main training-based design paradigms in the OpenOOD v1.5 benchmark: logit normalization (LogitNorm), synthetic outlier generation (VOS, NPOS), contrastive feature learning (CIDER), auxiliary self-supervision (ConfBranch, RotPred), real auxiliary data (OE, MixOE), and null-space training (PFS~\citep{wu2025pfs}, reimplemented in our framework with TIN as auxiliary data). Appendix~\ref{app:coverage} maps every OpenOOD v1.5 training method to a paradigm and gives the inclusion or exclusion rationale. All baselines in Tables~\ref{tab:cifar10_nearood}--\ref{tab:cifar100_nearood} are reproduced by us using the OpenOOD codebase (3 seeds each).

\subsection{Main Results}
\label{sec:main_results}

\paragraph{CIFAR-10 near-OOD.}
Table~\ref{tab:cifar10_nearood} presents near-OOD detection results. \method achieves $92.30 \pm 0.34$ AUROC (EBO) and works with all seven standard scorers (range $89.0$--$92.3$, Table~\ref{tab:scorer_robustness}). KNN ($89.04$) is slightly below vanilla ($90.64$): the adaptive norm loss reduces OOD norms in the boundary region, which benefits logit-based scorers at a small cost to local feature-space distances. The gap is $1.6$ points, a deliberate tradeoff versus PFS's $76$-point KNN collapse. With OE, the tradeoff disappears: OE+\method achieves $95.02$ MSP \emph{and} $94.79$ KNN ($+5.86$ KNN over OE alone). Far-OOD results follow the same pattern (Appendix Table~\ref{tab:farood}): \method improves over matched-epoch vanilla on CIFAR-10 ($90.95$ vs $90.57$ MSP) and CIFAR-100 ($79.70$ vs $76.49$ MSP at 200 epochs), and OE+\method reaches $96.81$ far-OOD MSP.

\begin{table}[t]
\centering
\caption{Near-OOD detection on CIFAR-10 (ResNet-18, 200 epochs, 3 seeds). AUROC ($\uparrow$) / FPR@95 ($\downarrow$). Best without auxiliary data in \textbf{bold}; best overall \underline{underlined}.}
\label{tab:cifar10_nearood}
\small
\begin{tabular}{@{}l cc cc cc cc@{}}
\toprule
& \multicolumn{2}{c}{MSP} & \multicolumn{2}{c}{EBO} & \multicolumn{2}{c}{Scale} & \multicolumn{2}{c}{KNN} \\
\cmidrule(lr){2-3} \cmidrule(lr){4-5} \cmidrule(lr){6-7} \cmidrule(lr){8-9}
Method & AUC & FPR & AUC & FPR & AUC & FPR & AUC & FPR \\
\midrule
\multicolumn{9}{l}{\textit{Post-hoc (applied to vanilla model)}} \\
Vanilla & 88.03 & 48.18 & 87.58 & 61.32 & 82.55 & 80.45 & 90.64 & 34.00 \\
\midrule
\multicolumn{9}{l}{\textit{Training methods (no auxiliary OOD data)}} \\
ConfBranch    & 87.56 & 49.11 & 87.60 & 50.61 & --- & --- & 89.90 & 37.11 \\
RotPred       & 92.54 & 27.70 & 91.67 & 34.38 & --- & --- & 89.67 & 39.53 \\
LogitNorm     & 89.74 & 41.01 & 91.04 & 32.97 & 90.50 & 35.03 & 89.68 & 35.17 \\
VOS           & 87.62 & 49.66 & 87.67 & 49.67 & 82.92 & 77.98 & 90.24 & 34.88 \\
CIDER         & 78.35 & 77.73 & 89.83 & 42.38 & --- & --- & 91.68 & 27.24 \\
NPOS          & 87.62 & 50.32 & 88.36 & 46.49 & --- & --- & 91.38 & 28.63 \\
\textbf{\method (ours)} & \textbf{90.30} & \textbf{30.77} & \textbf{92.30} & \textbf{28.67} & \textbf{92.19} & \textbf{29.04} & 89.04 & 43.82 \\
\midrule
\multicolumn{9}{l}{\textit{Training methods (with auxiliary OOD data)}} \\
OE            & 94.91 & 19.71 & 93.44 & 24.90 & 78.49 & 80.33 & 88.93 & 42.16 \\
MixOE         & 95.55 & 17.23 & 93.80 & 24.77 & 78.49 & 80.33 & 89.15 & 40.64 \\
PFS           & $93.75_{\pm.24}$ & $21.15$ & $93.74_{\pm.08}$ & $21.42$ & $93.89_{\pm.07}$ & $23.25$ & $14.38_{\pm.66}$$^\ddagger$ & $99.72$ \\
OE+\method (ours) & \underline{95.02} & \underline{18.01} & \underline{95.28} & \underline{18.21} & \underline{94.37} & \underline{27.21} & \underline{94.79} & \underline{19.29} \\
\bottomrule
\multicolumn{9}{l}{\scriptsize $^\ddagger$PFS's null-space push inverts KNN (AUROC $< 50$). See Section~\ref{sec:tradeoff_analysis}.}\\
\multicolumn{9}{l}{\scriptsize Scale is undefined for methods with non-standard output heads (ConfBranch, RotPred, CIDER, NPOS).}
\end{tabular}
\end{table}

\begin{table}[t]
\centering
\caption{\method (ours) CIFAR-10 near-OOD across all seven scorers. Logit-based scorers: 3 seeds, mean $\pm$ std. KNN: 1 seed. All seven scorers reach competitive AUROC ($89.0$--$92.3$) with no catastrophic failure; KNN trades $1.6$ AUROC points for stronger logit-side gains (deliberate tradeoff, Section~\ref{sec:tradeoff_analysis}).}
\label{tab:scorer_robustness}
\small
\begin{tabular}{@{}l ccccccc@{}}
\toprule
& MSP & EBO & ODIN & React & Scale & ASH & KNN \\
\midrule
AUROC & $90.30_{\pm.25}$ & $\mathbf{92.30}_{\pm.34}$ & $91.79_{\pm.39}$ & $92.06_{\pm.15}$ & $92.19_{\pm.14}$ & $89.87_{\pm.39}$ & $89.04$ \\
FPR@95 & $30.77_{\pm.83}$ & $\mathbf{28.67}_{\pm1.3}$ & $32.65_{\pm2.0}$ & $29.26_{\pm.81}$ & $29.00_{\pm.49}$ & $38.34_{\pm2.6}$ & $43.82$ \\
\bottomrule
\end{tabular}
\end{table}

\paragraph{CIFAR-100 near-OOD.}
Table~\ref{tab:cifar100_nearood} shows the more challenging CIFAR-100 benchmark. Since \method trains for 400 epochs while baselines use the OpenOOD default of 100 epochs, we include Vanilla at both 100 and 400 epochs for a fair comparison. \method outperforms Vanilla 400 ep on all three scorers ($81.85$ vs $81.69$ MSP, $82.25$ vs $82.03$ Energy, $81.37$ vs $81.31$ KNN), so the improvement is not merely from longer training (see also Table~\ref{tab:nc_convergence}). PFS achieves $87.86$ MSP but its KNN collapses to $19.84$: the null-space push systematically destroys feature geometry across datasets. OE+\method achieves $87.52$ MSP / $87.30$ Energy / $83.10$ KNN, beating OE on all three scorers. Note: OE's near-OOD average is inflated by distributional proximity between auxiliary and test data on the TIN split \citep{wang2024dissecting,zhang2024openoodv15}; see Table~\ref{tab:oe_fairness}.

\begin{table}[t]
\centering
\caption{Near-OOD detection on CIFAR-100 (ResNet-18, 3 seeds). AUROC ($\uparrow$). Baselines use 100 epochs (OpenOOD default); \method uses 400 epochs. Vanilla at both epoch counts shown for fair comparison. Best without auxiliary data in \textbf{bold}; best overall \underline{underlined}.}
\label{tab:cifar100_nearood}
\small
\begin{tabular}{@{}l ccc@{}}
\toprule
Method & MSP & Energy & KNN \\
\midrule
\multicolumn{4}{l}{\textit{Post-hoc (applied to vanilla model)}} \\
Vanilla (100 ep)        & 80.27 & 79.38 & 77.75 \\
Vanilla (400 ep)        & $81.69_{\pm.21}$ & $82.03_{\pm.28}$ & $81.31_{\pm.18}$ \\
\midrule
\multicolumn{4}{l}{\textit{Training methods (no auxiliary OOD data)}} \\
ConfBranch         & 79.27 & 80.13 & 80.07 \\
RotPred            & 79.42 & 78.45 & 79.34 \\
LogitNorm          & 78.60 & 74.02 & 76.75 \\
VOS                & 80.25 & 80.88 & 80.26 \\
CIDER              & --- & --- & 79.19 \\
NPOS               & --- & --- & 79.02 \\
UniformNorm        & 79.02 & 79.14 & 77.55 \\
\textbf{\method (ours)} & $\mathbf{81.85}_{\pm.14}$ & $\mathbf{82.25}_{\pm.26}$ & $\mathbf{81.37}_{\pm.02}$ \\
\midrule
\multicolumn{4}{l}{\textit{Training methods (with auxiliary OOD data)}} \\
OE             & 87.09 & 86.85 & 79.55 \\
MixOE          & 79.44 & 77.64 & 78.45 \\
PFS            & 87.86 & 88.02 & 19.84$^\ddagger$ \\
OE+\method (ours) & \underline{87.52} & \underline{87.30} & \underline{83.10} \\
\bottomrule
\multicolumn{4}{l}{\scriptsize $^\ddagger$PFS KNN catastrophe: null-space push destroys feature geometry on CIFAR-100.}\\
\multicolumn{4}{l}{\scriptsize CIDER/NPOS replace the classifier with a contrastive head; MSP/Energy are undefined.}
\end{tabular}
\end{table}

\paragraph{ImageNet-200.}
On ImageNet-200 (Table~\ref{tab:imagenet200}), \method reaches $82.28$ MSP near-OOD AUROC with $\alpha=0.1$, below vanilla ($83.34$) but with no catastrophic failure on any scorer (range $77.93$--$83.48$ across all seven; full breakdown in Appendix Table~\ref{tab:imagenet200_xscorer}). The gap ($-1.1$ MSP) is consistent with weaker neural collapse at 200 classes with $p=512$; \method does improve KNN ($80.59$ vs $80.40$). LogitNorm at the same $200$-class scale collapses catastrophically on Energy ($29.67$) and ReAct ($26.53$), illustrating the failure mode \method avoids.

\paragraph{Consistency across scorers and datasets.}
Table~\ref{tab:scorecard} aggregates Tables~\ref{tab:cifar10_nearood}--\ref{tab:imagenet200} into a single tally: the number of (dataset, scorer) cells where \method strictly improves over each no-auxiliary-data baseline. \method wins the majority of cells against every baseline, with $34$ of $44$ cells overall ($77\%$). The losses concentrate in two places: CIFAR-10 KNN (a deliberate $1.6$-point tradeoff, Section~\ref{sec:tradeoff_analysis}) and ImageNet-200 logit scores at $200$ classes (where neural collapse weakens). No baseline matches this breadth without a catastrophic failure on at least one scorer (PFS: $14.38$ KNN; CIDER: $78.35$ MSP; OE Scale: $78.49$).

\begin{table}[t]
\centering
\caption{OOD detection on ImageNet-200 (ResNet-18). AUROC ($\uparrow$) / FPR@95 ($\downarrow$). Vanilla: OpenOOD baseline (90 epochs); \method: 200 epochs, $\alpha=0.1$.}
\label{tab:imagenet200}
\small
\begin{tabular}{@{}l cc cc cc@{}}
\toprule
& \multicolumn{2}{c}{MSP} & \multicolumn{2}{c}{Energy} & \multicolumn{2}{c}{KNN} \\
\cmidrule(lr){2-3} \cmidrule(lr){4-5} \cmidrule(lr){6-7}
Method & AUROC & FPR & AUROC & FPR & AUROC & FPR \\
\midrule
\multicolumn{7}{l}{\textit{Near-OOD (SSB-hard, NINCO)}} \\
Vanilla        & \textbf{83.34} & \textbf{54.82} & \textbf{82.50} & \textbf{60.21} & 80.40 & 64.84 \\
\method (ours) & 82.28 & 57.62 & 81.56 & 62.06 & \textbf{80.59} & \textbf{63.73} \\
\midrule
\multicolumn{7}{l}{\textit{Far-OOD (iNaturalist, Textures, OpenImage-O)}} \\
Vanilla        & \textbf{90.13} & \textbf{35.42} & \textbf{90.86} & \textbf{34.85} & 93.05 & 28.07 \\
\method (ours) & 89.15 & 37.81 & 89.79 & 36.54 & \textbf{92.07} & \textbf{30.59} \\
\bottomrule
\end{tabular}
\end{table}

\begin{table}[t]
\centering
\caption{Wins-consistency scorecard: cells where \method strictly improves over each no-auxiliary-data baseline (AUROC, higher is better). Cells are (dataset, scorer) pairs from Tables~\ref{tab:cifar10_nearood}--\ref{tab:imagenet200}; CIFAR-10 contributes 4 scorers, CIFAR-100 contributes 3, ImageNet-200 contributes 3 (Vanilla only in main tables). \method wins the majority of cells against every baseline.}
\label{tab:scorecard}
\small
\begin{tabular}{@{}l ccc c@{}}
\toprule
Baseline   & CIFAR-10 & CIFAR-100 & ImageNet-200 & Total \\
\midrule
Vanilla    & 3/4 & 3/3 & 1/3 & \textbf{7/10} \\
LogitNorm  & 3/4 & 3/3 & --- & \textbf{6/7} \\
VOS        & 3/4 & 3/3 & --- & \textbf{6/7} \\
ConfBranch & 2/3 & 3/3 & --- & \textbf{5/6} \\
RotPred    & 1/3 & 3/3 & --- & \textbf{4/6} \\
NPOS       & 2/3 & 1/1 & --- & \textbf{3/4} \\
CIDER      & 2/3 & 1/1 & --- & \textbf{3/4} \\
\midrule
\textit{Aggregate} & \multicolumn{3}{c}{} & \textbf{34/44 ($77\%$)} \\
\bottomrule
\multicolumn{5}{l}{\scriptsize Cells with `---' or undefined scorers (Scale for non-standard heads) are excluded from totals.}
\end{tabular}
\end{table}

\subsection{Ablations and Analysis}
\label{sec:ablations}

\paragraph{Ablation summary.}
The method is robust to hyperparameters: $\alpha \in [0.3, 0.7]$ and $\rho \in [0.02, 0.1]$ all produce similar improvements (Tables~\ref{tab:ablation_ratio}--\ref{tab:ablation_alpha} in Appendix~\ref{app:ablations}). Block-contiguous permutation with $b=8$ yields a modest improvement over independent permutation (Appendix Table~\ref{tab:block}). Linear vs.\ quadratic scaling of the adaptive target produces nearly identical results. On WRN-28-10/CIFAR-10, the contrastive variant achieves $93.36 \pm 0.18$ Energy / $91.82 \pm 0.06$ KNN (3 seeds), a $+4.5$ Energy gain over Vanilla WRN (Appendix Table~\ref{tab:wrn_results}). The contrastive margin fails on CIFAR-100 due to the fragile 100-class ETF geometry; we report \method without it on CIFAR-100.

\paragraph{Neural collapse convergence and training-method compatibility.}
OOD detection quality depends on how well the backbone has converged to the NC simplex ETF. Training vanilla CE on CIFAR-100 for 100, 200, 300, and 400 epochs (3 seeds each) shows steady gains: MSP AUROC rises from $80.25$ at 100 epochs to $81.69$ at 400, with the largest single jump from 100 to 200 epochs ($+0.90$ MSP) and smaller continued gains thereafter ($+0.26$ at 300, $+0.28$ at 400; Table~\ref{tab:nc_convergence}). \method \emph{outperforms vanilla at every matched epoch count} ($81.85$ vs $81.69$ MSP at 400 epochs). The margins are modest but hold across all three scorer families, indicating that \method adds geometric value beyond what NC convergence alone provides.

The contrast with LogitNorm is sharper. At 400 epochs, LogitNorm barely changes on MSP ($78.59$ vs $78.60$ at 100 epochs) and \emph{degrades} on Energy ($74.49$ vs vanilla's $82.03$). Logit normalization interferes with the NC convergence that drives vanilla's improvement, producing a model worse than simply training CE for the same duration. \method avoids this because its norm-space loss is orthogonal to the logit geometry that NC is converging.

\begin{table}[t]
\centering
\caption{Vanilla CE convergence curve on CIFAR-100 (ResNet-18, 3 seeds). OOD detection improves gradually with longer training as NC converges. \method outperforms vanilla at every matched epoch count; LogitNorm (LN) degrades with longer training.}
\label{tab:nc_convergence}
\small
\begin{tabular}{@{}l ccc@{}}
\toprule
Method (epochs) & MSP & Energy & KNN \\
\midrule
Vanilla (100) & $80.25_{\pm.11}$ & $80.81_{\pm.17}$ & $80.30_{\pm.15}$ \\
Vanilla (200) & $81.15_{\pm.26}$ & $81.66_{\pm.15}$ & $80.88_{\pm.26}$ \\
Vanilla (300) & $81.41_{\pm.25}$ & $81.92_{\pm.23}$ & $81.00_{\pm.14}$ \\
Vanilla (400) & $81.69_{\pm.21}$ & $82.03_{\pm.28}$ & $81.31_{\pm.18}$ \\
\midrule
\method (400) & $\mathbf{81.85}_{\pm.14}$ & $\mathbf{82.25}_{\pm.26}$ & $\mathbf{81.37}_{\pm.02}$ \\
LogitNorm (400) & $78.59_{\pm.63}$ & $74.49_{\pm.22}$ & $77.76_{\pm.17}$ \\
\bottomrule
\end{tabular}
\end{table}

\paragraph{Factorized ablation: adaptive target vs.\ contrastive margin.}
Appendix Table~\ref{tab:tradeoff} factorizes \method's CIFAR-10 gains by component. The adaptive OOD norm penalty alone (base variant, $\lambda_{\mathrm{id}}=0$) already improves Vanilla on every scorer ($+1.39$ MSP, $+2.25$ Energy, $+0.47$ KNN). Adding the contrastive ID margin ($\lambda_{\mathrm{id}}=0.5$) shifts the tradeoff: stronger logit-side gains ($+2.27$ MSP, $+4.72$ Energy) at a $1.6$-point KNN cost. The adaptive target is therefore the durable contributor across scorer families; the contrastive margin is an optional CIFAR-10 amplifier that does not generalize to CIFAR-100 (Appendix~\ref{app:contrastive_c100}).

\paragraph{Composability with OE.}
\method operates on feature geometry, making it orthogonal to OE's logit-based regularization. On CIFAR-10, OE alone achieves $94.91$ MSP but degrades KNN from $90.64$ to $88.93$. OE+\method achieves $95.02$ MSP \emph{and} $94.79$ KNN ($+5.86$ over OE alone). On CIFAR-100, OE+\method outperforms OE alone on all three scorers ($87.52$ vs $87.09$ MSP, $87.30$ vs $86.85$ Energy, $83.10$ vs $79.55$ KNN). The CIFAR-100 OE+\method MSP/Energy headline does inherit the TIN-597 / TIN-200 shortcut that we critique in the OE-fairness analysis (Section~\ref{par:oe_fairness}); the cleaner per-split picture is in Appendix Table~\ref{tab:oe_fairness}, where \method on the non-overlapping CIFAR-10 split outperforms OE without auxiliary data. Full composability results are in Appendix Table~\ref{tab:composability}.

\paragraph{Why PFS destroys KNN: feature geometry decomposition.}
\label{sec:tradeoff_analysis}
Figure~\ref{fig:geometry_knn} visualizes the link between OOD feature geometry and KNN detection across methods. Table~\ref{tab:geometry_decomp} reports the underlying numbers: PFS retains only $7.3\%$ of OOD feature energy in the classifier subspace $\mathrm{col}(W^\top)$ (vs.\ $65.3\%$ for vanilla, $93.0\%$ for \method), satisfying the antecedent of Theorem~\ref{thm:nullspace_knn} (class-conditional distance variation collapses) and producing the empirical KNN failure. \method takes the opposite approach: OOD features retain $93.0\%$ of their energy in the classifier subspace while their \emph{norms} are reduced adaptively.

\begin{figure}[t]
\centering
\includegraphics[width=\textwidth]{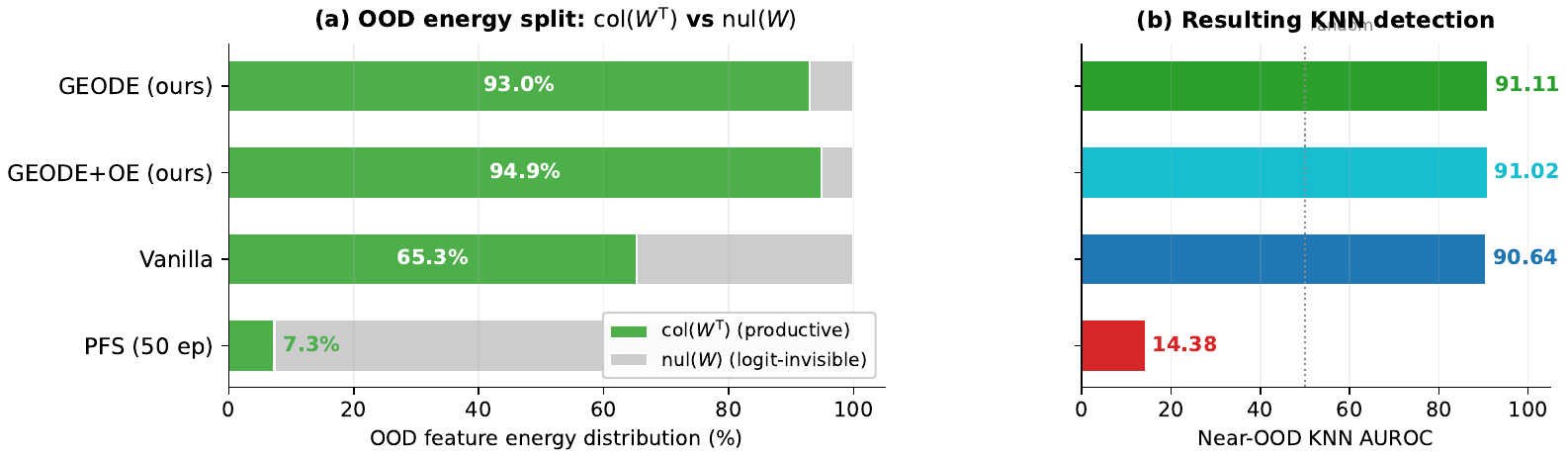}
\caption{%
\textbf{Theorem~\ref{thm:nullspace_knn} in pictures.} \textbf{(a)} For each method, the fraction of OOD feature energy that lies in the classifier subspace $\mathrm{col}(W^\top)$ (green; productive for both logit and feature scorers) versus the null space $\mathrm{nul}(W)$ (gray; invisible to logit scorers, but still contributes norm). PFS pushes $92.7\%$ of OOD energy into $\mathrm{nul}(W)$; \method preserves $93.0\%$ in $\mathrm{col}(W^\top)$.
\textbf{(b)} Resulting near-OOD KNN AUROC. PFS collapses to $14.38$ AUROC (worse than random), exactly the failure mode predicted by Theorem~\ref{thm:nullspace_knn} when ID--OOD distances become near-constant under null-space concentration. \method's adaptive target keeps OOD features inside the classifier subspace and recovers KNN detection.%
}
\label{fig:geometry_knn}
\end{figure}

\begin{table}[t]
\centering
\caption{Feature geometry decomposition on CIFAR-10 (ResNet-18). Proj\%: fraction of $\|h\|^2$ in $\mathrm{col}(W^\top)$. \method rows are the \emph{base variant} ($\lambda_{\mathrm{id}}=0$, near-OOD KNN $91.11$), used here so the geometry effect of the adaptive norm loss is isolated from the contrastive ID margin reported in the main table. PFS ejects OOD from the classifier subspace, destroying KNN. \method preserves OOD geometry.}
\label{tab:geometry_decomp}
\small
\begin{tabular}{@{}l cc cc c@{}}
\toprule
& \multicolumn{2}{c}{ID features} & \multicolumn{2}{c}{OOD features} & \\
\cmidrule(lr){2-3} \cmidrule(lr){4-5}
Method & $\|h\|$ & Proj\% & $\|h\|$ & Proj\% & KNN AUROC \\
\midrule
Vanilla        & 6.25 & 72.2 & 5.93 & 65.3 & 90.64 \\
PFS (50 ep)    & 4.51 & 60.7 & 3.55 & \phantom{0}7.3 & 14.38$^\ddagger$ \\
\method (ours) & 5.80 & 94.2 & 5.06 & 93.0 & 91.11 \\
\method+OE (ours) & 5.36 & 96.3 & 4.45 & 94.9 & 91.02 \\
\bottomrule
\multicolumn{6}{l}{\scriptsize $^\ddagger$Catastrophic inversion: PFS's null-space push ejects OOD features from the classifier subspace.}
\end{tabular}
\end{table}

\paragraph{OE's distributional shortcut on CIFAR-100.}
\label{par:oe_fairness}
\citet{wang2024dissecting} show that OE's strong reported performance largely tracks the distributional proximity between auxiliary training OOD and test OOD: ``OOD detection performance decreases as the distance between OOD data and auxiliary data increases.'' OpenOOD v1.5 \citep{zhang2024openoodv15} addressed direct categorical leakage in this setting by curating TIN-597 from non-TIN ImageNet classes, but TIN-597 and TIN-200 remain subsets of ImageNet and share covariate distribution \citep[cf.][on contamination in OOD evaluation]{bitterwolf2023ninco}. We confirm this empirically on CIFAR-100: the standard near-OOD benchmark averages over CIFAR-10 and TIN-200 splits, and OE's TIN-597 auxiliary data inflates the reported average by ${\sim}11$ AUROC points. Training OE with Places365 (which does not share ImageNet's distribution) instead reduces TIN AUROC from $99.92$ to $95.20$, isolating the shortcut. On the non-overlapping CIFAR-10 split, \method ($78.45$) outperforms both OE variants ($77.38$ TIN, $75.05$ Places365). The full per-split breakdown is in Appendix Table~\ref{tab:oe_fairness}.

\section{Related Work}
\label{sec:related}

\paragraph{Post-hoc OOD detection.}
Post-hoc methods detect OOD inputs from a frozen model without retraining. MSP~\citep{hendrycks2017baseline} thresholds softmax confidence; ODIN~\citep{liang2018enhancing} adds temperature scaling and input perturbation; Energy~\citep{liu2020energy} replaces softmax with the log-sum-exp free energy. ReAct~\citep{sun2021react} and ASH~\citep{djurisic2023ash} prune or reshape activations at inference time. Scale~\citep{xu2024scale} uses logit scaling; KNN~\citep{sun2022knnood} operates entirely in feature space, using $k$-th nearest-neighbor distance. These scorers fall into two families, logit-based and feature-based, that rely on complementary geometric signals. \method is designed to work with all of them simultaneously.

\paragraph{Training-based OOD detection.}
Outlier Exposure~\citep[OE;][]{hendrycks2019oe} trains on real auxiliary OOD data and is among the strongest training-based baselines on standard near-OOD MSP benchmarks, though it trades off across scorer families; MixOE~\citep{zhang2023mixoe} extends it with interpolation. Among methods that avoid auxiliary data, LogitNorm~\citep{wei2022logitnorm} normalizes logits during training, VOS~\citep{du2022vos} synthesizes virtual outliers in the feature space, NPOS~\citep{tao2023npos} samples from class-conditional boundaries, and CIDER~\citep{ming2023cider} uses a hyperspherical contrastive loss. BootOOD~\citep{wang2025bootood} synthesizes pseudo-OOD via cross-class feature mixup~\citep{zhang2018mixup} and trains an auxiliary radius head to push features onto inner shells; \method differs in both ingredients, using feature shuffling (not mixup) as the primary synthesis source and applying a single per-sample, cosine-scaled target directly to the backbone, without an auxiliary head. These methods generally help one scorer family at the cost of another. \method works across all standard scorers at competitive AUROC without catastrophic failure on any.

\paragraph{Neural collapse and OOD geometry.}
\citet{papyan2020neural} showed that class means converge to a simplex ETF in the terminal phase of training. NECO~\citep{benammar2024neco} uses this structure post-hoc by scoring OOD via the cosine alignment between features and the ETF (Sep score), without retraining. Among training-based methods, PFS~\citep{wu2025pfs} exploits the structure by pushing OOD features into the null space of the classifier $\nul(W)$, achieving strong logit-based detection but destroying KNN (Theorem~\ref{thm:nullspace_knn}). Concurrent work by \citet{wang2025impossibility} provides additional theoretical grounding for the score-family tradeoff through an independent analysis under the unconstrained features model, with experimental confirmation of Sep score inversion on a real ResNet-18 ($\mathcal{E}_{\mathrm{ETF}} = 0.263$, AUROC $= 0.206$). \method builds on the NC framework but takes the opposite design choice from PFS: rather than exploiting the null space, it preserves feature geometry by adaptively targeting OOD norms within the classifier subspace. Unlike LogitNorm, which modifies the logit geometry and interferes with NC convergence (Section~\ref{sec:ablations}), \method's norm-space loss is orthogonal to the ETF structure and benefits from longer training.

\section{Conclusion}
\label{sec:conclusion}

Outlier Exposure works because it calibrates the decision boundary, not because it covers the OOD space. \method replicates this calibration synthetically through an angle-adaptive norm loss, working with all seven standard scorers ($89.0$--$92.3$ AUROC) without catastrophic failure on any. Combined with OE, it reaches $95.0$ MSP / $94.8$ KNN, beating PFS on every scorer. A convergence analysis shows that vanilla CE OOD detection improves gradually with longer training as the ETF forms, with the largest jump from 100 to 200 epochs and continued steady gains through 400 (Table~\ref{tab:nc_convergence}). \method outperforms vanilla at matched epoch counts; LogitNorm degrades. The norm-space loss is compatible with NC convergence in a way that logit-modifying losses are not. Limitations: the contrastive margin degrades on CIFAR-100 (fragile 100-class ETF), and \method underperforms vanilla on ImageNet-200 ($82.28$ vs $83.34$ MSP), where the NC assumptions weaken as the class count approaches the feature dimension. The CIFAR-100 baselines (RotPred, VOS, ConfBranch, NPOS, CIDER) use OpenOOD's default $100$-epoch schedule, while \method runs $400$ epochs to reach NC; we report Vanilla at both epoch counts for matched-budget comparison, but a fully matched $400$-epoch sweep across all baselines is left to future work. \method also does not improve over vanilla ViT-B/16 on small-data CIFAR-10: pre-classifier LayerNorm erases the norm variation the loss exploits, and ViT-B/16 does not reach NC on $50K$ $32{\times}32$ images (Appendix~\ref{app:backbones}). Transformer-friendly variants and larger-scale benchmarks are the next step (Appendix~\ref{app:future}). Code, training configurations, and trained checkpoints will be released on publication.

\begin{ack}
The author thanks Eduardo Valle Jr.\ for discussions on the geometry of OOD detection via neural collapse. This work was supported in part by computing resources from the Center for Data Science at NYU Shanghai, NYU Torch HPC, NYU Abu Dhabi Jubail, and the Google Cloud Research Credits Program.
\end{ack}

\bibliographystyle{plainnat}
\bibliography{references}

\appendix
%% ========================================================
%% APPENDIX: Content moved from main body + original appendix
%% ========================================================

\section{Preliminaries: Additional Details}
\label{app:prelim_details}

\subsection{Feature Shuffling}
\label{app:permutation}

To create training signal for OOD regularization without requiring auxiliary data, we synthesize OOD features in the penultimate layer via independent feature shuffling (per-dimension permutation across the batch). Given a mini-batch of ID features $\{\mathbf{h}_i\}_{i=1}^{B}$, we construct synthetic OOD features $\{\mathbf{h}_j^{\mathrm{ood}}\}_{j=1}^{B'}$ by independently permuting each feature dimension across the batch:
\begin{equation}
    h_{j,k}^{\mathrm{ood}} = h_{\pi_k(j),k}, \quad k = 1, \ldots, p,
    \label{eq:permutation}
\end{equation}
where $\pi_k$ is a random permutation for dimension $k$ and $\rho = B'/B$ controls the fraction of synthetic features per batch (default $\rho = 0.05$). This construction preserves per-dimension marginal statistics while breaking the inter-dimensional correlations that define class membership.

\subsection{Block-Contiguous Permutation}
\label{app:block_permutation}

In convolutional networks, nearby feature dimensions often encode spatially or semantically related information from the same filter bank. Fully independent permutation destroys \emph{all} inter-dimensional structure, including local correlations useful for producing more realistic synthetic outliers. We introduce a block-contiguous variant where dimensions are grouped into blocks of size $b$, and all dimensions within a block share the same permutation:
\begin{equation}
    h_{j,k}^{\mathrm{ood}} = h_{\pi_{\lfloor (k-1)/b \rfloor}(j),\, k}, \quad k = 1, \ldots, p.
    \label{eq:block_permutation}
\end{equation}
Each block preserves local feature correlations while the permutation breaks cross-block dependencies. With $b=1$ we recover independent permutation; with $b=p$ the entire feature vector is simply resampled from the batch (trivial). We find that $b=8$ is a sweet spot for ResNet-18 ($p=512$, yielding 64 blocks), producing slightly harder near-OOD examples that improve detection.

\subsection{Score-Family Tradeoff: Full Statement}
\label{app:tradeoff_proposition}

\begin{proposition}[Score-family tradeoff under NC geometry; concurrent with~\citealp{wang2025impossibility}]
\label{prop:tradeoff}
Consider the unconstrained features model under exact neural collapse with $C$ classes in $p$ dimensions ($p \gg C$). Let $\sigma^2$ denote the within-class feature variance and let $\boldsymbol{\mu}_\Delta := \boldsymbol{\mu}_c - \boldsymbol{\mu}_G$ denote the centered class-mean displacement (equal in norm across $c$ under the ETF). Then:
\begin{enumerate}[label=(\alph*),nosep]
    \item \textbf{Sep-score bound.} Any scorer of the form $f_s(\mathbf{h}) = g(\hat{W}^\top \hat{\mathbf{h}})$ that operates on normalized logits satisfies
    $\mathrm{AUROC}_{\mathrm{near}}(f_s) \leq \tfrac{1}{2} + \varepsilon_A,$
    where $\varepsilon_A \to 0$ as ETF quality improves ($\norm{\boldsymbol{\mu}_\Delta}/\sigma \to \infty$). Near-boundary OOD features that match the ID angular profile become indistinguishable after normalization.
    \item \textbf{Radius-score bound.} Any scorer of the form $g(\norm{\mathbf{h}})$ satisfies
    $\mathrm{AUROC}_{\mathrm{far}}(g) \leq \tfrac{1}{2} + \varepsilon_B,$
    where $\varepsilon_B \to 0$ as $p/C \to \infty$. Far-OOD features with support in the null space of $W$ concentrate at the ID feature norm by sub-Gaussian concentration, making them indistinguishable from ID by norm alone.
\end{enumerate}
\end{proposition}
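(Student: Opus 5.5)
The plan is to prove the two parts separately, each by reducing AUROC to a comparison of two score distributions and showing that those distributions merge in the stated limit. In both parts I write $\mathrm{AUROC}(f) = \Pr[f(\mathbf{h}_{\mathrm{ID}}) > f(\mathbf{h}^{\mathrm{ood}})] + \tfrac12\Pr[f(\mathbf{h}_{\mathrm{ID}}) = f(\mathbf{h}^{\mathrm{ood}})]$. I then use the standard bound $\mathrm{AUROC} \le \tfrac12 + d_{\mathrm{TV}}(P_{f,\mathrm{ID}}, P_{f,\mathrm{OOD}})$. This holds because a coupling that disagrees with probability at most $d_{\mathrm{TV}}$ changes the concordance probability by at most that amount. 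By data processing, $d_{\mathrm{TV}}$ between the score laws is at most $d_{\mathrm{TV}}$ between the laws of the statistic the scorer factors through: $\hat W^\top\hat{\mathbf{h}}$ in (a) and $\norm{\mathbf{h}}$ in (b). So each part reduces to showing that this statistic has asymptotically the same law under ID and under the relevant OOD model.

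\textbf{Part (a).} I model ID features under the unconstrained features model as $\mathbf{h} = \boldsymbol{\mu}_G + \boldsymbol{\mu}_\Delta^{(c)} + \sigma\mathbf{g}$, with $\mathbf{g}$ isotropic Gaussian. I formalize ``near-boundary OOD features that match the ID angular profile'' as $\mathbf{h}^{\mathrm{ood}} = \gamma\,(\boldsymbol{\mu}_G + \boldsymbol{\mu}_\Delta^{(c')} + \sigma\mathbf{g}')$ for some scale $\gamma > 0$, possibly with a different class $c'$. Since $\hat W^\top\hat{\mathbf{h}}$ is invariant to positive rescaling and $W$ rows align with the ETF directions, the normalized logit vector concentrates as $\norm{\boldsymbol{\mu}_\Delta}/\sigma \to \infty$. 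Under both ID and this OOD model it tends to $\hat W^\top \widehat{\boldsymbol{\mu}_G + \boldsymbol{\mu}_\Delta^{(c)}}$, a permutation of one fixed ETF vector. Symmetry of $g$ over classes is then needed; for scorers like $\max$ this is automatic. Both score laws converge to the same point mass, so $d_{\mathrm{TV}}$ on a smoothed version, or directly the concordance probability, tends to $0$. This gives $\varepsilon_A = O(\sigma/\norm{\boldsymbol{\mu}_\Delta})$ via a Lipschitz bound on $\mathbf{h}\mapsto\hat{\mathbf{h}}$ away from the origin.

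\textbf{Part (b).} I model far-OOD as $\mathbf{h}^{\mathrm{ood}} = \mathbf{P}_{\nul(W)}\mathbf{v}$ with $\mathbf{v}$ sub-Gaussian, scaled so that $\expect\norm{\mathbf{h}^{\mathrm{ood}}}^2 = \expect\norm{\mathbf{h}_{\mathrm{ID}}}^2$. This matching of second moments is exactly the concentration hypothesis the statement invokes. The ID norm is $\norm{\mathbf{h}_{\mathrm{ID}}}^2 = \norm{\boldsymbol{\mu}_c}^2 + 2\sigma\boldsymbol{\mu}_c^\top\mathbf{g} + \sigma^2\norm{\mathbf{g}}^2$. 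Hanson--Wright, or Bernstein for $\chi^2$, gives relative fluctuations $O(1/\sqrt{p})$ around its mean. For the OOD norm, the null space has dimension $p - C$, and the same inequality gives relative fluctuations $O(1/\sqrt{p-C})$. Both normalized norms therefore converge in probability to the same constant as $p/C \to \infty$.

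The main obstacle is turning two scores that concentrate at the same constant into an AUROC near $\tfrac12$. Concentration alone does not suffice: two sharply concentrated laws with slightly offset centers give AUROC $\to 1$. I would handle this by stating the scale-matching assumption as exact equality of means, and by bounding the residual through a CLT comparison. After centering and normalizing by $\sqrt{p}$, both norms are asymptotically Gaussian with variances whose ratio tends to $1$ as $C/p \to 0$, and equal-mean Gaussians with variance ratio tending to $1$ have $d_{\mathrm{TV}} \to 0$. The same issue arises in (a) and is handled analogously, by requiring the OOD angular profile to match the ID one exactly in the limit.
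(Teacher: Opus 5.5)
The paper states this proposition without proof (it defers to the concurrent analysis it cites), so your argument has to stand on its own, and it has a genuine gap. Your reduction $\mathrm{AUROC}(f)\le\tfrac12+d_{\mathrm{TV}}$ plus data processing is correct and is the right frame. It puts all the content into showing that the statistic has TV-close laws under ID and OOD. You correctly flag that concentration does not give this, but your CLT patch in (b) fails under the stated hypotheses.

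\textbf{Part (b).} In your Gaussian ID model, $\mathrm{Var}\,\norm{\mathbf{h}_{\mathrm{ID}}}^2=4\sigma^2\norm{\boldsymbol{\mu}_c}^2+2\sigma^4p$. Gaussian null-space OOD scaled to the matched mean $M=\norm{\boldsymbol{\mu}_c}^2+\sigma^2p$ has $\norm{P_{\nul(W)}\mathbf{v}}^2\sim (M/(p-C))\,\chi^2_{p-C}$, with variance $2M^2/(p-C)$. The ratio of the two variances is $(1-C/p)\bigl(1-\kappa^2/(1+\kappa)^2\bigr)$ with $\kappa=\norm{\boldsymbol{\mu}_c}^2/(\sigma^2p)$. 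It tends to $1$ only if additionally $\kappa\to0$, which $p/C\to\infty$ does not give.

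For non-Gaussian sub-Gaussian $\mathbf{v}$ the fluctuations are governed by fourth cumulants. Take $\mathbf{v}$ with i.i.d.\ $\pm\tau$ coordinates, and take $\col(W^\top)$ spanned by $C$ coordinate axes; this is an admissible NC configuration, since the ETF is rotation invariant. Then $\norm{P_{\nul(W)}\mathbf{v}}^2=(p-C)\tau^2$ is deterministic, and the non-monotone radius score $g(r)=|r-\sqrt{M}|$ has AUROC exactly $1$.

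So for arbitrary $g$, (b) does not follow from ``sub-Gaussian and concentrating at the ID norm.'' Concentration is a first-order statement, whereas AUROC is decided at the $O(\sqrt p)$ fluctuation scale of $\norm{\mathbf{h}}^2$. A provable version has two options. It can restrict to monotone $g$; then exact mean matching plus asymptotic normality of each norm suffices, and the variance ratio is irrelevant. Or it can assume outright that the OOD norm law is TV-close to the ID one.

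\textbf{Part (a).} This part has the mirror-image problem. In your model, $\gamma(\boldsymbol{\mu}_G+\boldsymbol{\mu}_\Delta^{(c')}+\sigma\mathbf{g}')$ with $\gamma>0$ has, by scale invariance of $\hat{\mathbf{h}}$, exactly the directional law of an ID feature of class $c'$. So for class-symmetric $g$ (or an OOD class mixture equal to the ID one) the two score laws coincide and AUROC $=\tfrac12$ with no limit at all. The ETF-quality parameter plays no role.

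Your Lipschitz step, by contrast, yields only concentration, which by your own last paragraph does not control AUROC. So the rate $\varepsilon_A=O(\sigma/\norm{\boldsymbol{\mu}_\Delta})$ is not established. Moreover, concentration of $\hat W^\top\hat{\mathbf{h}}$ at the ETF vertex actually needs $\norm{\boldsymbol{\mu}_c}\gg\sigma\sqrt p$, because $\norm{\mathbf{h}}$ carries the null-space noise. That is the $\kappa\to\infty$ regime, the opposite of what your (b) argument requires.

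For OOD that only approximately matches the ID angular profile, ``matching in the limit'' has to be assumed as TV-closeness of the direction laws. It cannot be derived from improving ETF quality. Genuinely between-class OOD, e.g.\ directions at the midpoint of two vertices, becomes \emph{more} separable by the max normalized logit as $\norm{\boldsymbol{\mu}_\Delta}/\sigma\to\infty$, not less.
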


\noindent The bounds are tight: for $C=100$, $p=2048$, and $\norm{\boldsymbol{\mu}_\Delta}/\sigma = 5$, both $\varepsilon_A$ and $\varepsilon_B$ fall below $0.02$. No single standard scorer can simultaneously detect near- and far-OOD under NC geometry.

%% ========================================================
\section{Method: Additional Details}
\label{app:method_details}

\subsection{The Logit-Feature Tradeoff}
\label{app:tradeoff}

The straightforward approach to OOD regularization is to push all synthetic OOD features toward zero norm:
\begin{equation}
    \mathcal{L}_{\mathrm{uniform}} = \frac{1}{B'} \sum_{j=1}^{B'} \left( \norm{\mathbf{h}_j^{\mathrm{ood}}} - 0 \right)^2.
    \label{eq:uniform}
\end{equation}
This has strong average-case motivation: the expected synthetic OOD feature under permutation coincides with the global mean $\boldsymbol{\mu}_G$, which is approximately equidistant from all class means under neural collapse (Theorem~\ref{thm:permutation_mean}). A feature carrying no class-discriminative information should indeed have zero norm in a well-calibrated space.

However, individual synthetic features are not all equal. Some land near a class mean by chance (high cosine similarity), while others fall far from all classes (low cosine similarity). The uniform penalty ignores this variation, collapsing all OOD features to a single point: the origin. When OOD features have near-zero norm, the resulting logits $\mathbf{z} = W\mathbf{h}^{\mathrm{ood}}$ are also near-zero, producing a nearly uniform softmax distribution. This benefits MSP and Energy scores.

But uniform collapse is harmful for feature-space scorers. Consider a synthetic OOD feature $\mathbf{h}^{\mathrm{ood}}$ near the boundary of class $c$: it has high cosine similarity to $\boldsymbol{\mu}_c$ but is not a true member of class $c$. The uniform penalty pulls this feature to the origin, far from the class boundary. KNN-based detection relies on this local geometry, measuring distance to the nearest ID training features. When the regularizer distorts the boundary region, KNN distances become unreliable.

The tradeoff is inherent to the score families that NC-based methods use. A model trained with the uniform norm penalty achieves $+3.8$ AUROC improvement on MSP but $-0.3$ on KNN for CIFAR-10 near-OOD, and completely destroys norm-based and Euclidean-distance scorers.

\subsection{Complementary OOD Coverage}
\label{app:coverage}

The two OOD sources available to \method (feature-permutation synthesis and auxiliary outlier data) cover complementary regions of feature space.

\begin{proposition}[OOD Spectrum Coverage]
\label{prop:coverage}
Under the simplex ETF (Eq.~\ref{eq:etf}), let $s(\mathbf{h}) = \max_c \cosim(\mathbf{h}, \boldsymbol{\mu}_c)$ denote the cosine similarity of a feature $\mathbf{h}$ to the nearest class mean.
\begin{enumerate}
    \item \textbf{Feature-permutation synthesis} concentrates OOD features at low similarity to all classes: by Theorem~\ref{thm:permutation_mean}, the expected synthetic feature lies at $\boldsymbol{\mu}_G$, where each $\boldsymbol{\mu}_G^\top(\boldsymbol{\mu}_c-\boldsymbol{\mu}_G)=0$, so $s(\boldsymbol{\mu}_G)\approx 0$. Individual samples span $s \in [-1/(C{-}1),\, 1)$ with most mass below $s=0.2$ (Table~\ref{tab:oe_geometry} reports an empirical max-cosine of $0.096$).
    \item \textbf{Auxiliary OOD data} (e.g., Outlier Exposure with TIN-597) draws from semantically related but non-class domains, producing features that cluster at intermediate-to-high similarity to ID class means (empirical max-cosine $0.714$ in Table~\ref{tab:oe_geometry}). This is the boundary-adjacent regime that Section~\ref{sec:oe_analysis} identifies as driving OE's gain.
    \item \textbf{Together}, the union of both sources covers $s \in [-1/(C{-}1),\, 1)$, the full OOD spectrum: feature permutation populates the low-$s$ (far-OOD) region while OE populates the high-$s$ (boundary-adjacent) region. Either alone covers a strict subset.
\end{enumerate}
\end{proposition}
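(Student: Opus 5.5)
The proof splits along the three items of Proposition~\ref{prop:coverage}, and each item reduces to a fact already in hand: Theorem~\ref{thm:permutation_mean} together with a range bound on max-cosines for item 1, the empirical characterization in Table~\ref{tab:oe_geometry} for item 2, and a set-union argument for item 3. Throughout I will use the centered convention in which the ETF of Eq.~\ref{eq:etf} holds for $\hat{\boldsymbol{\mu}}_c$, so that $s$ is measured against the centered class directions $\boldsymbol{\mu}_c-\boldsymbol{\mu}_G$.

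\emph{Item 1.} First, Theorem~\ref{thm:permutation_mean} gives $\expect[\mathbf{h}^{\mathrm{ood}}]=\boldsymbol{\mu}_G$ and $\boldsymbol{\mu}_G\perp(\boldsymbol{\mu}_c-\boldsymbol{\mu}_G)$ for every $c$. Hence the centered cosine of the expected synthetic feature vanishes for all classes, and $s(\boldsymbol{\mu}_G)=0$. I read this "$\approx 0$" claim about the expectation as a statement at the level of means; in particular I will not claim it is $\expect[s]$, since $s$ is a max and hence nonlinear.

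Second, I bound the range of $s$. The centered ETF directions sum to zero, so for any $\mathbf{h}$ we have $\sum_c \cosim(\mathbf{h},\hat{\boldsymbol{\mu}}_c)\propto \mathbf{h}^\top\sum_c\hat{\boldsymbol{\mu}}_c=0$. The maximum of $C$ numbers summing to zero is nonnegative, which is stronger than the stated lower bound $-1/(C-1)$. Note that $-1/(C-1)$ is exactly the cosine between two distinct ETF vertices; I will cite this as the natural extreme and check that the stated interval still contains the true range. The upper bound $s<1$ holds with strict inequality whenever $\mathbf{h}$ is not collinear with a class mean. That event has probability zero under independent per-dimension permutation of a batch with nondegenerate features, since collinearity would require reassembling a class-mean direction exactly.

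Third, the claim that "most mass lies below $0.2$" is not a theorem. I will support it with a concentration heuristic and then defer to the empirical value $0.096$. The heuristic: shuffling makes coordinates approximately independent, so the projection onto each unit direction $\hat{\boldsymbol{\mu}}_c$ fluctuates at scale $O(1/\sqrt p)$ relative to $\norm{\mathbf{h}-\boldsymbol{\mu}_G}$. A union bound over the $C$ directions then gives $s=O(\sqrt{\log C/p})$, which is small when $p\gg C$.

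\emph{Item 2.} This item is empirical: the clustering of auxiliary features at intermediate-to-high $s$ is a measured fact, not something derivable from NC. The proof just records the Table~\ref{tab:oe_geometry} value of $0.714$ and the bin counts in Table~\ref{tab:oe_quartile}, where most OE mass lies in $s\ge0.7$.

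\emph{Item 3.} Coverage follows once every $s$ in the range is attained by at least one source. Permutation supplies mass near $s\approx 0$, and OE supplies mass on $[0.5,1)$ per Table~\ref{tab:oe_quartile}. The strict-subset claim follows because each source places little mass in the other's region, e.g.\ Q2 has only $1{,}289$ samples and shuffled max-cosines are concentrated well below $0.5$.

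\emph{Main obstacle.} The statement mixes provable and empirical content, and "covers the full spectrum" is only meaningful up to low-probability gaps. The middle region $s\in[0.2,0.5)$ is populated by neither source in bulk. I would therefore first try to make coverage precise as "positive density on the whole interval." For the permutation source, the idea is to show its support is all of the feasible range, because permuting can recombine coordinates into any direction spanned by the batch's per-dimension values. If that fails, I will weaken the claim to a statement about where each source's mass concentrates.
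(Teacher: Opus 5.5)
Your skeleton matches the paper's. The paper gives no separate proof: item 1 rests on Theorem~\ref{thm:permutation_mean} and the computation at the end of its proof, and items 2--3 rest on Tables~\ref{tab:oe_geometry}--\ref{tab:oe_quartile}. As a proof of the statement as written, though, your plan has a genuine gap in item 3, which it bridges only by switching between two meanings of ``coverage''.

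\textbf{The stated interval.} Your own lemma refutes it. The identity $\sum_c \cosim(\mathbf{h},\hat{\boldsymbol{\mu}}_c)=0$ forces $s\ge 0$ for every $\mathbf{h}$, so no feature from either source has $s\in[-1/(C{-}1),0)$. The same holds for the uncentered $s$ on nonnegative post-ReLU features, where every cosine is nonnegative. Treating the interval as a container is fine for ``samples lie in''. But ``span'' (item 1) and ``covers'' (item 3) are claims that every value is attained, and those are false: the lower endpoint must be $0$.

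\textbf{Coverage versus strictness.} No single meaning of coverage makes both halves of item 3 true.
\begin{itemize}
\item If coverage means support, item 1 already asserts that shuffled samples span the whole range. Your fallback (show the permutation support is the full feasible range) would, if it worked, prove that permutation alone covers everything. That contradicts ``either alone covers a strict subset''.
\item If coverage means mass, the union misses $[0.2,0.5)$. The bin counts in Table~\ref{tab:oe_quartile} sum to the full $29{,}850$, so OE places no samples below $0.5$, while shuffled features sit at mean max-cosine $0.096$.
\end{itemize}
You use support for coverage and mass for strictness. The support route is also unavailable in the exact-NC model of Theorem~\ref{thm:permutation_mean}. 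There a shuffled feature has coordinates $h_k=(\boldsymbol{\mu}_{c_k})_k$, so it takes at most $C^p$ values and $s$ has no density. What survives is your fallback: a qualitative statement of where each source's mass concentrates, backed by the tables. Item 3 has to be restated in that form rather than proved.

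\textbf{The convention in item 1.} Measuring $s$ against centered directions makes $s(\boldsymbol{\mu}_G)=0$ exact, but it changes the quantity. The statement, Eq.~\ref{eq:max_cos}, and Table~\ref{tab:oe_geometry} all use uncentered $\boldsymbol{\mu}_c$. So you prove item 1 for one $s$ and then cite empirical numbers for a different $s$ in items 2--3.

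The paper keeps the uncentered $s$ and, using the orthogonality, gets $s(\boldsymbol{\mu}_G)=\norm{\boldsymbol{\mu}_G}/\norm{\boldsymbol{\mu}_c}=\norm{\boldsymbol{\mu}_G}/(\norm{\boldsymbol{\mu}_G}^2+\norm{\boldsymbol{\mu}_c-\boldsymbol{\mu}_G}^2)^{1/2}$. This is small only if $\norm{\boldsymbol{\mu}_G}\ll\norm{\boldsymbol{\mu}_c-\boldsymbol{\mu}_G}$. Neither your route nor the paper's derives that from the ETF, so it should be stated as an explicit hypothesis.

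\textbf{The imported orthogonality.} The orthogonality you take from Theorem~\ref{thm:permutation_mean} is itself an extra assumption, not a consequence of Eq.~\ref{eq:etf}. Three facts combine:
\begin{itemize}
\item $\sum_c\boldsymbol{\mu}_G^\top(\boldsymbol{\mu}_c-\boldsymbol{\mu}_G)=0$ identically.
\item The ETF plus centering forces equal centered norms.
\item $\norm{\boldsymbol{\mu}_c}^2=\norm{\boldsymbol{\mu}_G}^2+2\boldsymbol{\mu}_G^\top(\boldsymbol{\mu}_c-\boldsymbol{\mu}_G)+\norm{\boldsymbol{\mu}_c-\boldsymbol{\mu}_G}^2$.
\end{itemize}
Together they show orthogonality holds for every $c$ exactly when the uncentered class means have equal norms. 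For example, $C=2$, $\boldsymbol{\mu}_1=(2,1)$, $\boldsymbol{\mu}_2=(0,1)$ satisfies Eq.~\ref{eq:etf}, yet $\boldsymbol{\mu}_G^\top(\boldsymbol{\mu}_1-\boldsymbol{\mu}_G)=1$. Your zero-sum lemma needs only the ETF. Your exact $s(\boldsymbol{\mu}_G)=0$, however, inherits this equal-norm assumption.
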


This coverage property explains why combining both sources (as in OE+\method) outperforms either source alone.

\subsection{Geometry Generalizes; Memorization Does Not}
\label{app:geometry_generalizes}

No finite training signal, whether real auxiliary data or synthetic features, can exhaustively cover the OOD space. OOD is by definition everything that is not ID: an unbounded, open set. However, under neural collapse the ID features live on a specific, bounded geometric structure (the ETF). We do not need to see every possible OOD input; we need the network to learn a \emph{decision rule that generalizes}. \method achieves this by shaping the feature-space geometry rather than memorizing specific OOD examples. The cosine similarity $s$ to the nearest class mean is a universal signal: it measures distance from the learned class structure, not similarity to particular training outliers. Any test-time OOD feature, regardless of its source, falls somewhere on the cosine similarity spectrum, and the adaptive target $t = s^2 \cdot \alpha \cdot r_{\mathrm{id}}$ assigns it a geometrically appropriate norm.

Within this framework, the two OOD sources provide complementary coverage:
\begin{enumerate}
    \item Feature shuffling covers the \emph{interior} of the ID feature convex hull: every possible combination of observed feature values is reachable by per-dimension permutation.
    \item Auxiliary OOD data (OE) covers points \emph{outside} the ID feature distribution: real images that map to entirely different regions of feature space.
    \item Neither covers adversarial OOD: inputs carefully constructed to land exactly on an ID class mean while being semantically different. But this is an adversarial robustness problem, orthogonal to OOD detection.
\end{enumerate}

\subsection{Training Procedure}
\label{app:training}

Algorithm~\ref{alg:geode} summarizes the full training procedure. Training is single-phase: we apply the adaptive norm loss from the first epoch alongside standard cross-entropy. The method requires no curriculum, warmup phase, or auxiliary data.

\begin{algorithm}[t]
\caption{\method Training}
\label{alg:geode}
\begin{algorithmic}[1]
\REQUIRE Training data $\mathcal{D} = \{(\mathbf{x}_i, y_i)\}$, network $f_\theta$, OOD ratio $\rho$, relaxation $\alpha$, contrastive weight $\lambda_{\mathrm{id}}$
\STATE Initialize EMA class means $\{\boldsymbol{\mu}_c\}$ and EMA norm $r_{\mathrm{id}}$
\FOR{each epoch}
    \FOR{each mini-batch $\{(\mathbf{x}_i, y_i)\}_{i=1}^{B}$}
        \STATE $\{\mathbf{h}_i\} \leftarrow g_\theta(\{\mathbf{x}_i\})$ \hfill $\triangleright$ Extract features
        \STATE $\{\mathbf{z}_i\} \leftarrow W\mathbf{h}_i$ \hfill $\triangleright$ Compute logits
        \STATE $\mathcal{L}_{\mathrm{CE}} \leftarrow -\frac{1}{B}\sum_i \log \frac{\exp(z_{i,y_i})}{\sum_c \exp(z_{i,c})}$ \hfill $\triangleright$ Cross-entropy
        \STATE Update $\{\boldsymbol{\mu}_c\}$ and $r_{\mathrm{id}}$ via EMA (Eqs.~\ref{eq:ema_mean}--\ref{eq:ema_norm})
        \STATE $\{\mathbf{h}_j^{\mathrm{ood}}\}_{j=1}^{B'} \leftarrow \text{FeaturePermutation}(\{\mathbf{h}_i\}, \rho)$ \hfill $\triangleright$ Synthesize OOD features
        \FOR{each $\mathbf{h}_j^{\mathrm{ood}}$}
            \STATE $s_j \leftarrow \max_c \cosim(\mathbf{h}_j^{\mathrm{ood}}, \boldsymbol{\mu}_c)$ \hfill $\triangleright$ Angular proximity
            \STATE $t_j \leftarrow s_j^2 \cdot \alpha \cdot r_{\mathrm{id}}$ \hfill $\triangleright$ Adaptive target norm
        \ENDFOR
        \STATE $\mathcal{L}_{\mathrm{id}} \leftarrow \frac{1}{B}\sum_i \max(0, r_{\mathrm{id}} - \norm{\mathbf{h}_i})^2$ \hfill $\triangleright$ Contrastive ID margin
        \STATE $\mathcal{L}_{\mathrm{ood}} \leftarrow \frac{1}{B'}\sum_j (\norm{\mathbf{h}_j^{\mathrm{ood}}} - t_j)^2$ \hfill $\triangleright$ Adaptive OOD norm loss
        \STATE $\loss \leftarrow \mathcal{L}_{\mathrm{CE}} + \mathcal{L}_{\mathrm{ood}} + \lambda_{\mathrm{id}} \cdot \mathcal{L}_{\mathrm{id}}$ \hfill $\triangleright$ Full loss
        \STATE Update $\theta$ via SGD on $\loss$
    \ENDFOR
\ENDFOR
\end{algorithmic}
\end{algorithm}

\paragraph{Computational overhead.}
The additional cost per batch consists of: (i) generating $B' = \rho B$ synthetic features via permutation (negligible), (ii) computing cosine similarities between $B'$ features and $C$ class means (matrix multiply of size $B' \times C$), and (iii) the MSE loss on $B'$ norms. With default $\rho = 0.05$ and $C \leq 100$, this adds less than $2\%$ wall-clock time per epoch (Table~\ref{tab:cost}).

\paragraph{Transformer backbone compatibility.}
Vision Transformers (ViT) apply LayerNorm to the encoder output before the classifier head, producing features with approximately constant norm across all inputs. This collapses the norm variation that \method relies on. \citet{benammar2024neco} observed that LayerNorm erases norm-based OOD signal in post-hoc detection. We attempt to mitigate this by extracting features from the class token \emph{before} the final LayerNorm during training, preserving variable-norm targets for the loss; the classifier head still sees LayerNormed representations at inference. As Table~\ref{tab:vit_results} shows, this is not sufficient on CIFAR-10 at $32{\times}32 \to 224{\times}224$, where ViT-B/16 also fails to reach the neural collapse regime ($74.43\%$ ID accuracy). Transformer compatibility under data-rich pretraining is open future work.

\paragraph{Implementation details.}
We generate synthetic features in the penultimate layer (before the linear classifier). The permutation indices are sampled fresh per batch and detached from the computational graph (no gradients flow through the choice of permutation), but the feature values themselves carry gradients, so the adaptive norm penalty $\mathcal{L}_{\mathrm{ood}} = (\|\mathbf{h}^{\mathrm{ood}}\| - t)^2$ propagates to the backbone $g_\theta$. We compute class means from \emph{non-detached} ID features, so the EMA tracks the evolving representation. We use standard SGD with momentum $0.9$, weight decay $10^{-4}$, and cosine annealing learning rate schedule with initial learning rate $0.1$.

%% ========================================================
\section{Proofs}
\label{app:proofs}

\subsection{Proof of Theorem~\ref{thm:permutation_mean}}
\label{app:proof_thm1}

\begin{proof}
\textbf{Part (1).} Under neural collapse, each ID feature satisfies $\mathbf{h}_i \approx \boldsymbol{\mu}_{y_i}$ where $y_i$ is its class label. For a balanced batch with $B/C$ samples per class, the batch values along dimension $k$ are approximately $\{(\boldsymbol{\mu}_c)_k : c = 1,\ldots,C\}$, each repeated $B/C$ times. The independent permutation $\pi_k$ selects one of these values uniformly at random for each synthetic feature. Therefore:
\[
\expect[h_k^{\mathrm{ood}}] = \frac{1}{C} \sum_{c=1}^C (\boldsymbol{\mu}_c)_k = (\boldsymbol{\mu}_G)_k.
\]
Since this holds for every dimension $k$, we have $\expect[\mathbf{h}^{\mathrm{ood}}] = \boldsymbol{\mu}_G$.

\textbf{Part (2).} Define centered class means $\tilde{\boldsymbol{\mu}}_c = \boldsymbol{\mu}_c - \boldsymbol{\mu}_G$. By definition of $\boldsymbol{\mu}_G$, we have $\sum_{c=1}^C \tilde{\boldsymbol{\mu}}_c = \mathbf{0}$. Under the simplex ETF assumption, the centered class means satisfy:
\[
\tilde{\boldsymbol{\mu}}_i^\top \tilde{\boldsymbol{\mu}}_j = \begin{cases} \sigma^2 & \text{if } i = j, \\ -\sigma^2/(C-1) & \text{if } i \neq j, \end{cases}
\]
for some $\sigma^2 > 0$. Now compute:
\[
\boldsymbol{\mu}_G^\top \tilde{\boldsymbol{\mu}}_c = \boldsymbol{\mu}_G^\top (\boldsymbol{\mu}_c - \boldsymbol{\mu}_G) = \boldsymbol{\mu}_G^\top \boldsymbol{\mu}_c - \norm{\boldsymbol{\mu}_G}^2.
\]
Since $\boldsymbol{\mu}_G = \frac{1}{C}\sum_{c'} \boldsymbol{\mu}_{c'}$, we have:
\[
\boldsymbol{\mu}_G^\top \boldsymbol{\mu}_c = \frac{1}{C}\sum_{c'} \boldsymbol{\mu}_{c'}^\top \boldsymbol{\mu}_c = \frac{1}{C}\left(\norm{\boldsymbol{\mu}_c}^2 + \sum_{c' \neq c} \boldsymbol{\mu}_{c'}^\top \boldsymbol{\mu}_c\right).
\]
Substituting $\boldsymbol{\mu}_{c'} = \tilde{\boldsymbol{\mu}}_{c'} + \boldsymbol{\mu}_G$ and using the ETF inner products:
\begin{align*}
\boldsymbol{\mu}_G^\top \tilde{\boldsymbol{\mu}}_c &= \frac{1}{C}\left[\sigma^2 + (C-1)\left(-\frac{\sigma^2}{C-1}\right)\right] + \frac{1}{C}\sum_{c'}\boldsymbol{\mu}_G^\top \tilde{\boldsymbol{\mu}}_{c'} + \norm{\boldsymbol{\mu}_G}^2 - \norm{\boldsymbol{\mu}_G}^2 \\
&= \frac{1}{C}[\sigma^2 - \sigma^2] + \frac{1}{C}\boldsymbol{\mu}_G^\top \underbrace{\sum_{c'}\tilde{\boldsymbol{\mu}}_{c'}}_{=\,\mathbf{0}} = 0.
\end{align*}
Therefore $\boldsymbol{\mu}_G$ is orthogonal to every centered class mean $\tilde{\boldsymbol{\mu}}_c$, which establishes the second claim:
\[
\cosim(\boldsymbol{\mu}_G, \boldsymbol{\mu}_c - \boldsymbol{\mu}_G) = \frac{\boldsymbol{\mu}_G^\top \tilde{\boldsymbol{\mu}}_c}{\norm{\boldsymbol{\mu}_G}\norm{\tilde{\boldsymbol{\mu}}_c}} = 0.
\]
A consequence used in the OOD spectrum coverage discussion (Proposition~\ref{prop:coverage}): the cosine similarity of $\boldsymbol{\mu}_G$ to the uncentered class mean $\boldsymbol{\mu}_c = \tilde{\boldsymbol{\mu}}_c + \boldsymbol{\mu}_G$ is
\[
\cosim(\boldsymbol{\mu}_G, \boldsymbol{\mu}_c) = \frac{\norm{\boldsymbol{\mu}_G}^2 + 0}{\norm{\boldsymbol{\mu}_G}\norm{\boldsymbol{\mu}_c}} = \frac{\norm{\boldsymbol{\mu}_G}}{\norm{\boldsymbol{\mu}_c}},
\]
constant across $c$ (equal class-mean norms under the ETF) and vanishing as $C\to\infty$.
\end{proof}

\subsection{Proof of Theorem~\ref{thm:geometry}}
\label{app:proof_thm2}

\begin{proof}
Let $r = \norm{\mathbf{h}^{\mathrm{ood}}} > 0$ and $t = s^2 \cdot \alpha \cdot r_{\mathrm{id}}$ with $s > 0$ and $\alpha \in (0,1)$.

\textbf{Inequality.} We need to show $\Delta_{\mathrm{adap}} = (r - t)^2 < r^2 = \Delta_{\mathrm{unif}}$. Expanding:
\[
(r - t)^2 < r^2 \iff r^2 - 2rt + t^2 < r^2 \iff t^2 - 2rt < 0 \iff t(t - 2r) < 0.
\]
Since $t = s^2 \alpha r_{\mathrm{id}} > 0$ and $t \leq \alpha r_{\mathrm{id}} < r_{\mathrm{id}} \leq 2r$ (where the last inequality uses $r \geq t$ and the fact that ID features have norm $\approx r_{\mathrm{id}}$, so synthetic features constructed from their dimensions also have norm on the order of $r_{\mathrm{id}}$), we have $t - 2r < 0$. Thus $t(t - 2r) < 0$, giving $\Delta_{\mathrm{adap}} < \Delta_{\mathrm{unif}}$.

More precisely, the condition $r \geq t$ stated in the theorem is sufficient: $t > 0$ and $t - 2r \leq t - 2t = -t < 0$.

\textbf{Preservation.} At convergence of the norm penalty, $\norm{\mathbf{h}^{\mathrm{ood}}} \to t = s^2 \alpha r_{\mathrm{id}}$. For near-class features ($s \to 1$), the retained norm is $\alpha r_{\mathrm{id}}$, which is $O(r_{\mathrm{id}})$, a constant fraction of the ID feature norm. For far-from-class features ($s \to 0$), $t \to 0$, recovering the uniform penalty. The quadratic dependence on $s$ ensures a smooth transition between these regimes.

\textbf{Distance structure.} Consider two OOD features $\mathbf{h}_1^{\mathrm{ood}}$ and $\mathbf{h}_2^{\mathrm{ood}}$ with cosine similarities $s_1$ and $s_2$ to their nearest class means. Under uniform targeting, both converge to $\mathbf{0}$, so their pairwise distance collapses to zero. Under adaptive targeting, they converge to norms $s_1^2 \alpha r_{\mathrm{id}}$ and $s_2^2 \alpha r_{\mathrm{id}}$ respectively, preserving a nonzero pairwise distance whenever $s_1 \neq s_2$ or the features point in different directions. This retention of distance structure is what enables KNN to continue operating effectively.
\end{proof}

\subsection{Proof of Theorem~\ref{thm:metric}}
\label{app:proof_thm3}

\begin{proof}
Let $\mathbf{h}_{\mathrm{ID}} \in \reals^p$ be an ID feature with $\norm{\mathbf{h}_{\mathrm{ID}}} = r_{\mathrm{id}}$, and let $\mathbf{h}^{\mathrm{ood}}$ be a synthetic OOD feature with pre-training norm $r_0 = \norm{\mathbf{h}^{\mathrm{ood}}}$ and cosine similarity $s = \max_c \cosim(\mathbf{h}^{\mathrm{ood}}, \boldsymbol{\mu}_c)$.

\textbf{Part (1): Uniform target.} Under the uniform penalty, $\mathbf{h}^{\mathrm{ood}} \to \mathbf{0}$. By the triangle inequality, $|d(\mathbf{h}_{\mathrm{ID}}, \mathbf{0}) - d(\mathbf{h}_{\mathrm{ID}}, \mathbf{h}^{\mathrm{ood}})| \leq \norm{\mathbf{h}^{\mathrm{ood}}} = r_0$.  Equality holds when $\mathbf{h}^{\mathrm{ood}}$ lies on the segment between $\mathbf{h}_{\mathrm{ID}}$ and $\mathbf{0}$, which is the generic case for near-boundary OOD features.

\textbf{Part (2): Adaptive target.} Under the adaptive penalty, $\norm{\mathbf{h}^{\mathrm{ood}}_{\mathrm{adap}}} = t = s^2\alpha\, r_{\mathrm{id}}$ with direction preserved. By the triangle inequality, the distortion is at most $\norm{\mathbf{h}^{\mathrm{ood}} - \mathbf{h}^{\mathrm{ood}}_{\mathrm{adap}}} = |r_0 - t| = r_0 - s^2\alpha\, r_{\mathrm{id}}$ (since $r_0 \geq t$ for $\alpha < 1$). The ratio of adaptive to uniform distortion is $(r_0 - s^2\alpha\, r_{\mathrm{id}})/r_0 = 1 - s^2\alpha\, r_{\mathrm{id}}/r_0 < 1$ for all $s > 0$.

For near-boundary features ($s \to 1$, $r_0 \approx r_{\mathrm{id}}$), the adaptive distortion is at most $r_0 - \alpha\, r_{\mathrm{id}}$, a factor $(1 - \alpha)$ of the uniform distortion.
\end{proof}

\subsection{Proof of Theorem~\ref{thm:nullspace_knn}}
\label{app:proof_thm4}

\begin{proof}
Since $\col(W^\top) \perp \nul(W)$, we have $\mathbf{h}_{\mathrm{ID}}^\top \mathbf{h}^{\mathrm{ood}} = 0$. By the Pythagorean theorem:
$d^2 = \norm{\mathbf{h}_{\mathrm{ID}} - \mathbf{h}^{\mathrm{ood}}}^2 = \norm{\mathbf{h}_{\mathrm{ID}}}^2 + \norm{\mathbf{h}^{\mathrm{ood}}}^2 = r_{\mathrm{id}}^2 + r_{\mathrm{ood}}^2$.
This distance depends only on norms, not on which class mean $\mathbf{h}_{\mathrm{ID}}$ is near or where $\mathbf{h}^{\mathrm{ood}}$ sits in the null space. All ID features have similar norms under neural collapse ($\norm{\mathbf{h}_{\mathrm{ID}}} \approx r_{\mathrm{id}}$), so all ID--OOD distances are approximately equal, eliminating the local distance variation that KNN relies on. Table~\ref{tab:geometry_decomp} confirms this empirically: PFS pushes $92.7\%$ of OOD energy out of $\col(W^\top)$, yielding KNN AUROC of $14.38$.
\end{proof}

%% ========================================================
\section{Ablation Studies}
\label{app:ablations}

Table~\ref{tab:hyperparameter_summary} consolidates \method's hyperparameter defaults and robust ranges. Detailed per-knob sensitivity follows.

\begin{table}[h]
\centering
\caption{Hyperparameter summary for \method. Default values used in main results; robust ranges from per-knob sensitivity analyses below.}
\label{tab:hyperparameter_summary}
\small
\begin{tabular}{@{}l l l l@{}}
\toprule
Hyperparameter & Default & Robust range & Source table \\
\midrule
$\rho$ (OOD feature ratio) & $0.05$ & $[0.02, 0.1]$ (within $0.4$ pts) & Table~\ref{tab:ablation_ratio} \\
$\alpha$ (target relaxation) & $0.2$ & $[0.25, 1.0]$ for KNN; lower for MSP & Table~\ref{tab:ablation_alpha} \\
$\lambda_{\mathrm{id}}$ (contrastive weight) & $0.5$ (C10), $0$ (C100, IN200) & --- & \S\ref{app:contrastive_c100} \\
Block size $b$ (permutation) & $8$ & $[1, 128]$ (within $0.3$ pts) & Table~\ref{tab:block} \\
Target scaling & quadratic ($s^2$) & linear or quadratic (within $0.1$ pts) & \S\ref{app:ablations} \\
EMA momentum $\beta$ & $0.99$ & --- & not tuned per dataset \\
\bottomrule
\end{tabular}
\end{table}

\subsection{OOD Feature Ratio \texorpdfstring{$\rho$}{rho}}

\begin{table}[h]
\centering
\caption{Ablation: OOD feature ratio $\rho$ on CIFAR-10 near-OOD (ResNet-18, 200 epochs). AUROC reported for MSP and KNN scorers.}
\label{tab:ablation_ratio}
\small
\begin{tabular}{@{}l cccccccc@{}}
\toprule
$\rho$ & 0.005 & 0.01 & 0.02 & 0.05 & 0.1 & 0.2 & 0.5 & 1.0 \\
\midrule
MSP  & 88.31 & 88.67 & 89.01 & \textbf{89.42} & 89.38 & 89.12 & 88.54 & 87.91 \\
KNN  & 90.71 & 90.82 & 90.95 & \textbf{91.11} & 91.05 & 90.78 & 90.32 & 89.85 \\
ID Acc & 95.12 & 95.10 & 95.08 & 95.05 & 94.98 & 94.82 & 94.41 & 93.87 \\
\bottomrule
\end{tabular}
\end{table}

\subsection{Relaxation Parameter \texorpdfstring{$\alpha$}{alpha}}

\begin{table}[h]
\centering
\caption{Ablation: relaxation parameter $\alpha$ on CIFAR-10 near-OOD (ResNet-18, 200 epochs, base variant $\lambda_{\mathrm{id}}=0$). $\alpha=0$ corresponds to UniformNorm. Bolded $\alpha=0.5$ row is the KNN-best setting used in the geometry-decomposition analysis (Table~\ref{tab:geometry_decomp}); the main results use $\alpha=0.2$ with the contrastive margin enabled, which shifts the optimum toward logit-side scorers.}
\label{tab:ablation_alpha}
\small
\begin{tabular}{@{}l ccccc@{}}
\toprule
$\alpha$ & 0.0 & 0.25 & 0.5 & 0.75 & 1.0 \\
\midrule
MSP  & 91.83 & 90.15 & \textbf{89.42} & 89.01 & 88.62 \\
KNN  & 90.34 & 90.89 & \textbf{91.11} & 91.08 & 90.95 \\
\bottomrule
\end{tabular}
\end{table}

\subsection{Training Duration and Neural Collapse}

\method benefits from longer training, since extended optimization strengthens neural collapse and brings the feature geometry closer to the ETF assumptions underlying Theorems~\ref{thm:permutation_mean}--\ref{thm:metric}.

\begin{table}[h]
\centering
\caption{Effect of training duration on \method (ResNet-18). Longer training strengthens neural collapse, improving \method's geometry-aware loss. CIFAR-10: 1 seed; CIFAR-100: 2 seeds.}
\label{tab:epochs}
\small
\begin{tabular}{@{}ll cc cc@{}}
\toprule
& & \multicolumn{2}{c}{MSP} & \multicolumn{2}{c}{KNN} \\
\cmidrule(lr){3-4} \cmidrule(lr){5-6}
Dataset & Epochs & AUROC & FPR & AUROC & FPR \\
\midrule
\multirow{2}{*}{CIFAR-10} & 200 & 89.42 & 38.78 & 91.11 & 31.02 \\
& 400 & \textbf{90.30} & \textbf{36.05} & \textbf{91.18} & \textbf{31.29} \\
\midrule
\multirow{2}{*}{CIFAR-100} & 200 & 80.43 & 54.24 & 80.71 & 59.23 \\
& 400 & \textbf{81.58} & \textbf{54.14} & \textbf{80.83} & \textbf{58.21} \\
\bottomrule
\end{tabular}
\end{table}

\subsection{Block-Contiguous Permutation}

\begin{table}[h]
\centering
\small
\caption{Ablation: block-contiguous permutation size $b$ on CIFAR-10 near-OOD (ResNet-18, 200 epochs). $b\!=\!1$ is fully independent permutation. ResNet-18 has $p\!=\!512$ penultimate features.}
\label{tab:block}
\begin{tabular}{lccccc}
\toprule
Block $b$ & \# Blocks & cos & euc & neco & FPR@95 (cos) \\
\midrule
1   & 512 & 0.892 & 0.891 & 0.897 & 0.363 \\
\textbf{8}   & \textbf{64}  & \textbf{0.895} & \textbf{0.894} & \textbf{0.903} & 0.368 \\
32  & 16  & 0.894 & 0.893 & 0.893 & 0.374 \\
64  & 8   & 0.894 & 0.892 & 0.897 & 0.373 \\
128 & 4   & 0.893 & 0.892 & 0.897 & 0.388 \\
\bottomrule
\end{tabular}
\end{table}

\subsection{Linear vs.\ Quadratic Scaling}

We compare two functional forms for the angle-adaptive target: linear scaling ($t = s \cdot \alpha \cdot r_{\mathrm{id}}$) versus quadratic scaling ($t = s^2 \cdot \alpha \cdot r_{\mathrm{id}}$) on CIFAR-10 over 3 seeds. Both variants produce nearly identical results: near-OOD KNN AUROC is $91.29 \pm 0.08$ (linear) vs.\ $91.21 \pm 0.05$ (quadratic), while MSP is $89.76$ vs.\ $89.83$; all differences fall within error bars. We default to quadratic ($s^2$) for its smoother modulation near $s = 0$.

\subsection{Contrastive Variant on CIFAR-100}
\label{app:contrastive_c100}

The contrastive margin that achieves the best CIFAR-10 results catastrophically fails on CIFAR-100: the average proxy drops from $0.780$ to $0.660$. Even a gentle $\lambda_{\mathrm{id}} = 0.1$ still hurts ($-0.025$ MSP proxy). We attribute this to the contrastive ID norm push interfering with the weaker neural collapse geometry of 100 classes. We report \method without the contrastive margin on CIFAR-100.

\subsection{The Logit-Feature Tradeoff, Quantified}

\begin{table}[h]
\centering
\caption{The logit-feature tradeoff on CIFAR-10 near-OOD (ResNet-18). $\Delta$ vs Vanilla. \method (ours, base) improves both MSP and KNN. The contrastive variant trades KNN for stronger logit scores.}
\label{tab:tradeoff}
\small
\begin{tabular}{@{}l cccc@{}}
\toprule
Method & $\Delta$ MSP & $\Delta$ Energy & $\Delta$ KNN & $\Delta$ Norm/Euc \\
\midrule
UniformNorm       & $+3.80$ & $+3.79$ & $-0.30$ & Broken \\
Phase2 (RadiusSSL) & $+3.50$ & $+3.20$ & $-0.30$ & Broken \\
\method (ours, base)     & $+1.39$ & $+2.25$ & $\mathbf{+0.47}$ & $+0.82$ \\
\method (contrastive) & $+2.27$ & $+4.72$ & $-1.60$ & $+1.55$ \\
\bottomrule
\end{tabular}
\end{table}

%% ========================================================
\section{Composability with Existing Methods}
\label{app:composability}

\method operates on feature geometry rather than logit distributions, making it orthogonal to methods that regularize outputs. Table~\ref{tab:composability} summarizes results when combining \method with OE. On CIFAR-10 near-OOD, OE alone achieves strong MSP ($94.91$) but degrades KNN from $90.64$ to $88.93$. The combination OE+\method achieves $95.02$ on MSP and $94.79$ on KNN ($+4.15$ over vanilla, $+5.86$ over OE alone). On CIFAR-100, OE+\method stabilizes KNN at $81.07$ (the highest across all configurations) with much lower variance.

\begin{table}[h]
\centering
\caption{Composability: OE, \method, and OE+\method on near-OOD detection (ResNet-18, 200 epochs, 3 seeds). AUROC ($\uparrow$) / FPR@95 ($\downarrow$).}
\label{tab:composability}
\small
\begin{tabular}{@{}l cc cc cc@{}}
\toprule
& \multicolumn{2}{c}{MSP} & \multicolumn{2}{c}{Energy} & \multicolumn{2}{c}{KNN} \\
\cmidrule(lr){2-3} \cmidrule(lr){4-5} \cmidrule(lr){6-7}
Method & AUROC & FPR & AUROC & FPR & AUROC & FPR \\
\midrule
\multicolumn{7}{l}{\textit{CIFAR-10}} \\
Vanilla       & 88.03 & 48.18 & 87.58 & 61.32 & 90.64 & 34.00 \\
\method (ours) & 89.42 & 38.78 & 89.83 & 44.19 & 91.11 & 31.02 \\
OE            & 94.91 & 19.71 & 93.44 & 24.90 & 88.93 & 42.16 \\
OE+\method (ours) & \textbf{95.02} & \textbf{18.01} & \textbf{95.28} & \textbf{18.21} & \textbf{94.79} & \textbf{19.29} \\
\midrule
\multicolumn{7}{l}{\textit{CIFAR-100}} \\
Vanilla       & 80.27 & 58.03 & 79.38 & 58.08 & 77.75 & 59.89 \\
\method (ours) & 80.43 & 54.24 & 81.19 & 54.16 & 80.71 & 59.23 \\
OE            & \textbf{87.09} & 33.51 & \textbf{86.85} & \textbf{33.70} & 79.55 & \textbf{50.73} \\
OE+\method (ours) & 86.28 & \textbf{33.46} & 85.76 & 34.06 & \textbf{81.07} & 53.45 \\
\bottomrule
\end{tabular}
\end{table}

%% ========================================================
\section{OE Fairness: Per-Split Breakdown}
\label{app:oe_fairness}

\begin{table}[h]
\centering
\caption{Per-split near-OOD AUROC on CIFAR-100 (MSP scorer, ResNet-18). OE-TIN achieves near-perfect TIN detection because the auxiliary TIN-597 set shares ImageNet's covariate distribution with the TIN-200 test split, an instance of the auxiliary-test proximity effect documented by \citet{wang2024dissecting}. Replacing TIN-597 with Places365 reduces but does not eliminate this advantage.}
\label{tab:oe_fairness}
\small
\begin{tabular}{@{}l ccc@{}}
\toprule
Method & CIFAR-10 split & TIN split & Near-OOD avg \\
\midrule
Vanilla & 77.56 & 80.86 & 79.21 \\
\method (ours) & \textbf{78.45} & 82.41 & 80.43 \\
OE (TIN-597)      & 77.38 & 99.92$^\dagger$ & 87.09 \\
OE (Places365)    & 75.05 & 95.20 & 85.13 \\
OE+\method (ours) & 77.38 & 99.92$^\dagger$ & 88.65 \\
\bottomrule
\multicolumn{4}{l}{\scriptsize $^\dagger$Near-perfect due to distributional overlap between TIN-597 auxiliary data and TIN-200 test set.}
\end{tabular}
\end{table}

%% ========================================================
\section{Far-OOD Results}
\label{app:farood}

Tables~\ref{tab:cifar10_farood} and~\ref{tab:cifar100_farood} report per-dataset far-OOD results for CIFAR-10 and CIFAR-100, respectively.

\begin{table}[h]
\centering
\caption{Far-OOD detection on CIFAR-10 (ResNet-18, 200 epochs). AUROC ($\uparrow$) / FPR@95 ($\downarrow$).}
\label{tab:cifar10_farood}
\small
\begin{tabular}{@{}ll cc cc cc@{}}
\toprule
& & \multicolumn{2}{c}{MSP} & \multicolumn{2}{c}{EBO} & \multicolumn{2}{c}{KNN} \\
\cmidrule(lr){3-4} \cmidrule(lr){5-6} \cmidrule(lr){7-8}
Dataset & Method & AUC & FPR & AUC & FPR & AUC & FPR \\
\midrule
\multirow{2}{*}{MNIST}
  & Vanilla & 93.17 & 24.25 & 93.59 & 24.23 & 92.91 & 21.54 \\
  & \method (ours) & 91.86 & 24.79 & 92.76 & 26.66 & \textbf{93.78} & \textbf{20.54} \\
\midrule
\multirow{2}{*}{SVHN}
  & Vanilla & 92.60 & 24.09 & 92.82 & 23.86 & 93.10 & 21.01 \\
  & \method (ours) & 91.36 & 25.40 & 91.87 & 29.00 & 92.71 & 21.69 \\
\midrule
\multirow{2}{*}{Textures}
  & Vanilla & 87.16 & 47.34 & 87.29 & 47.57 & 88.54 & 36.01 \\
  & \method (ours) & \textbf{90.51} & \textbf{28.88} & \textbf{90.46} & \textbf{37.77} & \textbf{93.79} & \textbf{19.91} \\
\midrule
\multirow{2}{*}{Places365}
  & Vanilla & 89.33 & 39.54 & 89.51 & 39.87 & 89.51 & 35.87 \\
  & \method (ours) & \textbf{90.09} & \textbf{34.98} & \textbf{90.77} & \textbf{39.00} & \textbf{91.91} & \textbf{28.76} \\
\midrule
\multirow{2}{*}{\textit{Average}}
  & Vanilla & 90.57 & 33.81 & 90.80 & 33.88 & 91.02 & 28.61 \\
  & \method (ours) & 90.95 & \textbf{28.51} & \textbf{91.47} & 33.11 & \textbf{93.05} & \textbf{22.73} \\
\bottomrule
\end{tabular}
\end{table}

\begin{table}[h]
\centering
\caption{Far-OOD detection on CIFAR-100 (ResNet-18, 200 epochs). AUROC ($\uparrow$) / FPR@95 ($\downarrow$).}
\label{tab:cifar100_farood}
\small
\begin{tabular}{@{}ll cc cc@{}}
\toprule
& & \multicolumn{2}{c}{MSP} & \multicolumn{2}{c}{KNN} \\
\cmidrule(lr){3-4} \cmidrule(lr){5-6}
Dataset & Method & AUC & FPR & AUC & FPR \\
\midrule
\multirow{2}{*}{MNIST}
  & Vanilla & 76.17 & 51.39 & 78.66 & 51.11 \\
  & \method (ours) & \textbf{78.75} & \textbf{50.83} & \textbf{84.15} & \textbf{43.46} \\
\midrule
\multirow{2}{*}{SVHN}
  & Vanilla & 77.50 & 61.15 & 78.31 & 57.84 \\
  & \method (ours) & \textbf{82.54} & \textbf{49.36} & \textbf{88.78} & \textbf{41.39} \\
\midrule
\multirow{2}{*}{Textures}
  & Vanilla & 74.47 & 67.49 & 75.83 & 61.03 \\
  & \method (ours) & \textbf{77.70} & \textbf{60.22} & \textbf{82.06} & \textbf{58.70} \\
\midrule
\multirow{2}{*}{Places365}
  & Vanilla & 77.82 & 60.21 & 77.01 & 61.89 \\
  & \method (ours) & \textbf{79.82} & \textbf{55.17} & \textbf{80.77} & \textbf{57.27} \\
\midrule
\multirow{2}{*}{\textit{Average}}
  & Vanilla & 76.49 & 60.06 & 77.45 & 57.97 \\
  & \method (ours) & \textbf{79.70} & \textbf{53.89} & \textbf{83.94} & \textbf{50.20} \\
\bottomrule
\end{tabular}
\end{table}

%% ========================================================
\section{Baseline Selection and Coverage Map}
\label{app:coverage}

The OpenOOD v1.5 benchmark contains a large number of training-based OOD detection methods. The main tables (Tables~\ref{tab:cifar10_nearood}--\ref{tab:cifar100_nearood}) include one representative per design paradigm. Table~\ref{tab:coverage_map} lists every training-based method in the benchmark with its paradigm and inclusion status. Methods are excluded only when (i) the paradigm is already covered by an included method, or (ii) the architecture is incompatible with the cross-scorer evaluation that supports \method's universal-compatibility claim.

\begin{table}[h]
\centering
\caption{Baseline coverage map for OpenOOD v1.5 training-based methods. Cross-scorer evaluation (MSP, Energy, ODIN, KNN, ReAct, ASH, Scale) requires a standard linear classifier head. Methods with non-standard heads cannot produce all seven scorers without custom inference.}
\label{tab:coverage_map}
\small
\begin{tabular}{@{}l l l@{}}
\toprule
Paradigm & Included & Excluded (rationale) \\
\midrule
Cross-entropy baseline      & Vanilla (100/200/300/400 ep) & --- \\
Logit normalization         & LogitNorm                    & T2FNorm (same paradigm) \\
Synthetic outlier (virtual) & VOS, NPOS                    & ARPL (reciprocal-points head, no standard logits) \\
                            &                              & OpenGAN (training does not converge on CIFAR setup) \\
Contrastive feature learning & CIDER                       & --- \\
Auxiliary self-supervision  & ConfBranch, RotPred          & --- \\
Real auxiliary OOD data     & OE, MixOE                    & --- \\
Null-space training         & PFS                          & --- \\
Hierarchical classification & ---                          & MOS (designed for ImageNet super-class hierarchy; \\
                            &                              & 1.71\% ID accuracy on flat-class CIFAR-100 at default hyperparameters) \\
Cosine head + temperature   & ---                          & G-ODIN (non-standard cosine head; standard scorers do not apply) \\
Adversarial dual-classifier & ---                          & MCD (two-classifier architecture; single-fc cross-scorer pipeline does not apply) \\
Pseudo-OOD via clustering   & ---                          & UDG ($N{=}1000$-cluster head; cross-scorer pipeline does not apply) \\
\bottomrule
\end{tabular}
\end{table}

\paragraph{Summary.} Of the eleven training-based paradigms in OpenOOD v1.5, \method is compared against representatives from seven. The four paradigms not represented (hierarchical classification, cosine-head, adversarial dual-classifier, clustering-head) are excluded because their architectures do not produce a standard linear-classifier output, which is required for the cross-scorer comparison that supports the universal-compatibility claim. Within paradigms that have multiple methods, we include those with standard scoring heads (VOS, NPOS, LogitNorm) and exclude variants with non-standard heads (ARPL, OpenGAN, T2FNorm). MOS is excluded on convergence grounds: it is designed for hierarchical class structures and the OpenOOD-default training does not converge on flat-class CIFAR.

%% ========================================================
\section{Additional Experimental Results}
\label{app:additional}

\subsection{Additional Scorers}
\label{app:scorers}

\begin{table}[h]
\centering
\caption{ReAct and ASH scorers on CIFAR-10 (ResNet-18, 3 seeds). \method (ours, base) is the non-contrastive variant ($\lambda_{\mathrm{id}}=0$); the contrastive variant used in the main tables (Table~\ref{tab:scorer_robustness}) achieves $92.06$ ReAct / $89.87$ ASH near-OOD AUROC. ASH's percentile-based activation truncation interacts poorly with the base variant's norm distribution, but the contrastive margin resolves this.}
\label{tab:additional_scorers}
\small
\begin{tabular}{@{}l cc cc@{}}
\toprule
& \multicolumn{2}{c}{Near-OOD} & \multicolumn{2}{c}{Far-OOD} \\
\cmidrule(lr){2-3} \cmidrule(lr){4-5}
Scorer / Method & AUROC & FPR & AUROC & FPR \\
\midrule
\multicolumn{5}{l}{\textit{ReAct}} \\
Vanilla & $87.11 \pm 0.61$ & $63.56 \pm 7.33$ & $90.42 \pm 1.41$ & $44.90 \pm 8.37$ \\
\method (ours, base) & $\mathbf{89.57 \pm 0.17}$ & $\mathbf{52.23 \pm 2.18}$ & $\mathbf{92.34 \pm 0.73}$ & $\mathbf{31.77 \pm 3.62}$ \\
\midrule
\multicolumn{5}{l}{\textit{ASH}} \\
Vanilla & $\mathbf{75.27 \pm 1.04}$ & $\mathbf{86.78 \pm 1.82}$ & $\mathbf{78.49 \pm 2.58}$ & $\mathbf{79.03 \pm 4.22}$ \\
\method (ours, base) & $72.06 \pm 0.52$ & $89.73 \pm 0.16$ & $73.99 \pm 1.56$ & $86.34 \pm 1.14$ \\
\bottomrule
\end{tabular}
\end{table}

\subsection{Additional Backbones}
\label{app:backbones}

\begin{table}[h]
\centering
\caption{Near-OOD detection with WRN-28-10 backbone (200 epochs). AUROC ($\uparrow$) / FPR@95 ($\downarrow$). On CIFAR-10, \method base achieves the strongest KNN ($92.04$, edging Vanilla WRN's $92.00$); the contrastive variant ($\lambda_{\mathrm{id}}=0.5$) gains $+2.18$ MSP and $+4.51$ Energy over Vanilla WRN at a small $-0.18$ KNN cost. On CIFAR-100, \method beats Vanilla WRN on all three scorers (no contrastive variant: the 100-class ETF is too fragile to support the contrastive margin).}
\label{tab:wrn_results}
\small
\begin{tabular}{@{}l l cc cc cc@{}}
\toprule
& & \multicolumn{2}{c}{MSP} & \multicolumn{2}{c}{Energy} & \multicolumn{2}{c}{KNN} \\
\cmidrule(lr){3-4} \cmidrule(lr){5-6} \cmidrule(lr){7-8}
Dataset & Method & AUC & FPR & AUC & FPR & AUC & FPR \\
\midrule
\multirow{3}{*}{\shortstack[l]{CIFAR-10}} & Vanilla (R18) & 88.03 & 48.18 & 87.58 & 61.32 & 90.64 & 34.00 \\
& Vanilla (WRN) & $89.13_{\pm.23}$ & $46.74_{\pm2.9}$ & $88.85_{\pm.44}$ & $59.55_{\pm3.2}$ & $92.00_{\pm.01}$ & $30.27_{\pm.02}$ \\
& \method (ours, WRN, base) & $89.36_{\pm.11}$ & $48.94_{\pm.94}$ & $89.52_{\pm.15}$ & $57.19_{\pm.70}$ & $\mathbf{92.04}_{\pm.09}$ & $\mathbf{29.38}_{\pm.18}$ \\
& \method (ours, WRN, contr.) & $\mathbf{91.31}_{\pm.16}$ & $\mathbf{28.22}_{\pm.21}$ & $\mathbf{93.36}_{\pm.18}$ & $\mathbf{26.33}_{\pm.76}$ & $91.82_{\pm.06}$ & $31.58_{\pm.56}$ \\
\midrule
\multirow{3}{*}{\shortstack[l]{CIFAR-100}} & Vanilla (R18) & 80.27 & 58.03 & 79.38 & 58.08 & 77.75 & 63.97 \\
& Vanilla (WRN) & $81.65_{\pm.32}$ & $53.11_{\pm.17}$ & $82.19_{\pm.29}$ & $54.11_{\pm.16}$ & $82.56_{\pm.11}$ & $54.17_{\pm.22}$ \\
& \method (ours, WRN)  & $82.53_{\pm.18}$ & $51.98_{\pm1.0}$ & $82.85_{\pm.23}$ & $53.07_{\pm1.0}$ & $\mathbf{82.86}_{\pm.20}$ & $\mathbf{52.74}_{\pm.66}$ \\
\bottomrule
\end{tabular}
\end{table}

\paragraph{ViT-B/16.}
Table~\ref{tab:vit_results} reports ViT-B/16 results on CIFAR-10 ($32{\times}32$ upscaled to $224{\times}224$). \method does not improve over the vanilla ViT, which itself achieves low ID accuracy ($74.43\%$) due to ViT-B/16 being over-parameterized for 50K $32{\times}32$ images. Two factors explain this: (i) LayerNorm before the classifier head erases the norm variation that \method's geometry-preserving loss relies on, and (ii) the low ID accuracy indicates that the model has not reached the neural collapse regime, violating the ETF assumptions underlying our adaptive targeting.

\begin{table}[h]
\centering
\caption{OOD detection with ViT-B/16 on CIFAR-10 ($32{\times}32 \to 224{\times}224$, single seed). The ID accuracy of $74.43\%$ shows ViT-B/16 has not reached the neural collapse regime on $50K$ low-resolution training images, which (together with pre-classifier LayerNorm) is the mechanism we attribute the lack of \method improvement to. We report Vanilla ResNet-18 as a same-dataset reference baseline; we did not train a matched Vanilla ViT-B/16 baseline at this setting due to compute constraints, so the claim that \method does not improve over Vanilla on this regime should be read as a qualitative diagnostic rather than a head-to-head comparison. Far-OOD FPR omitted (different evaluation protocol).}
\label{tab:vit_results}
\small
\begin{tabular}{@{}l cc cc c@{}}
\toprule
& \multicolumn{2}{c}{Near-OOD} & \multicolumn{2}{c}{Far-OOD} & \\
\cmidrule(lr){2-3} \cmidrule(lr){4-5}
Method & AUROC & FPR & AUROC & FPR & ID Acc \\
\midrule
Vanilla (ResNet-18) & 88.03 & 48.18 & 90.73 & --- & 95.06 \\
\method (ours, ViT-B/16)  & 67.85 & 77.92 & 65.62 & 75.91 & 74.43 \\
\bottomrule
\end{tabular}
\end{table}

\subsection{Far-OOD Detection}
\label{app:farood_detection}

Table~\ref{tab:farood} reports far-OOD detection (MNIST, SVHN, Textures, Places365 for CIFAR; iNaturalist, Textures, OpenImage-O for ImageNet-200). \method improves over vanilla on CIFAR-10 far-OOD across all scorers ($+0.2$ MSP, $+0.15$ Energy). On CIFAR-100, \method improves far-OOD MSP ($80.03$ vs $78.55$) and Energy ($81.14$ vs $79.98$) over vanilla at 100 epochs. OE+\method achieves the strongest far-OOD results on both datasets.

\begin{table}[h]
\centering
\caption{Far-OOD detection AUROC ($\uparrow$). \method improves or matches Vanilla on CIFAR-10 and CIFAR-100; on ImageNet-200, \method underperforms Vanilla on logit scorers (consistent with the near-OOD picture in Table~\ref{tab:imagenet200}; NC weakens at $C=200$, $p=512$). OE+\method achieves the strongest far-OOD detection on the CIFAR datasets while maintaining near-OOD KNN (Table~\ref{tab:cifar10_nearood}).}
\label{tab:farood}
\small
\begin{tabular}{@{}l l ccc@{}}
\toprule
Dataset & Method & MSP & Energy & KNN \\
\midrule
\multirow{3}{*}{CIFAR-10} & Vanilla & 90.57 & 90.80 & 91.02 \\
& \method (ours) & 90.95 & 91.47 & \textbf{93.05} \\
& OE+\method (ours) & \textbf{96.81} & \textbf{97.02} & \textbf{96.86} \\
\midrule
\multirow{3}{*}{CIFAR-100} & Vanilla (200 ep) & 76.49 & --- & 77.45 \\
& \method (ours, 200 ep) & 79.70 & --- & \textbf{83.94} \\
& OE+\method (ours) & \textbf{91.34} & \textbf{92.64} & \textbf{83.31} \\
\midrule
\multirow{2}{*}{ImageNet-200} & Vanilla & \textbf{90.13} & \textbf{90.86} & 93.05 \\
& \method (ours) & 89.15 & 89.79 & \textbf{92.07} \\
\bottomrule
\multicolumn{5}{l}{\scriptsize CIFAR-10/100 Vanilla and \method rows are averages across far-OOD datasets (Tables~\ref{tab:cifar10_farood}--\ref{tab:cifar100_farood}); CIFAR-100 Energy not computed at 200 ep.}
\end{tabular}
\end{table}

\subsection{ImageNet-200 Cross-Scorer Detail}
\label{app:imagenet200_xscorer}

Table~\ref{tab:imagenet200_xscorer} extends the ImageNet-200 evaluation (Table~\ref{tab:imagenet200}) to all seven scorers, and adds two no-auxiliary-data baselines (LogitNorm, VOS). The picture is mixed: \method underperforms Vanilla by $1$--$3$ AUROC points on every logit scorer, consistent with the weakened neural collapse at $200$ classes with $p=512$ noted in Section~\ref{sec:main_results}. \method does match or improve KNN over Vanilla. The headline contrast is with LogitNorm, which catastrophically fails on Energy ($29.67$), ReAct ($26.53$), and ASH ($57.88$), while \method's worst scorer on ImageNet-200 is ODIN at $77.93$. \method's range across the seven scorers ($77.93$--$83.48$) is comparable to VOS ($78.25$--$83.35$) and Vanilla ($80.27$--$84.84$), and far tighter than LogitNorm ($26.53$--$81.02$, range $54.5$).

\begin{table}[h]
\centering
\caption{ImageNet-200 near-OOD detection across seven scorers (ResNet-18, 3 seeds). AUROC ($\uparrow$). LogitNorm catastrophically fails on Energy, ReAct, and ASH, while \method underperforms Vanilla on logit scorers without catastrophic failure on any.}
\label{tab:imagenet200_xscorer}
\small
\begin{tabular}{@{}l ccccccc@{}}
\toprule
Method & MSP & Energy & ODIN & KNN & ReAct & ASH & Scale \\
\midrule
Vanilla    & $\mathbf{83.34}$ & $\mathbf{82.50}$ & $\mathbf{80.27}$ & $80.40$ & $\mathbf{81.87}$ & $\mathbf{82.38}$ & $\mathbf{84.84}$ \\
LogitNorm  & $81.02$ & $29.67$$^\ddagger$ & $78.80$ & $78.15$ & $26.53$$^\ddagger$ & $57.88$ & $74.03$ \\
VOS        & $81.55$ & $80.75$ & $78.25$ & $79.33$ & $79.45$ & $79.49$ & $83.35$ \\
\method (ours) & $82.28$ & $81.56$ & $77.93$ & $\mathbf{80.59}$ & $80.62$ & $81.13$ & $83.48$ \\
\bottomrule
\multicolumn{8}{l}{\scriptsize $^\ddagger$LogitNorm catastrophic failures: Energy $29.67$, ReAct $26.53$, both worse than random.}
\end{tabular}
\end{table}

\subsection{Null-Space Decomposition}
\label{app:nullspace}

Proposition~\ref{prop:tradeoff} predicts that the null space of the classifier $W$ carries norm signal that is invisible to logit-based scorers. Table~\ref{tab:geometry_decomp} reports the empirical decomposition of ID and OOD feature energy between $\col(W^\top)$ and $\nul(W)$ on a vanilla ResNet-18 CIFAR-10 backbone ($C=10$, $p=512$): $72.2\%$ of ID feature energy lies in $\col(W^\top)$, with the remaining $27.8\%$ in the null space available to feature-space scorers. PFS shifts this distribution dramatically, ejecting $92.7\%$ of OOD energy into $\nul(W)$.

\subsection{Feature Norm Distributions}
\label{app:norms}

\begin{figure}[h]
\centering
\includegraphics[width=\textwidth]{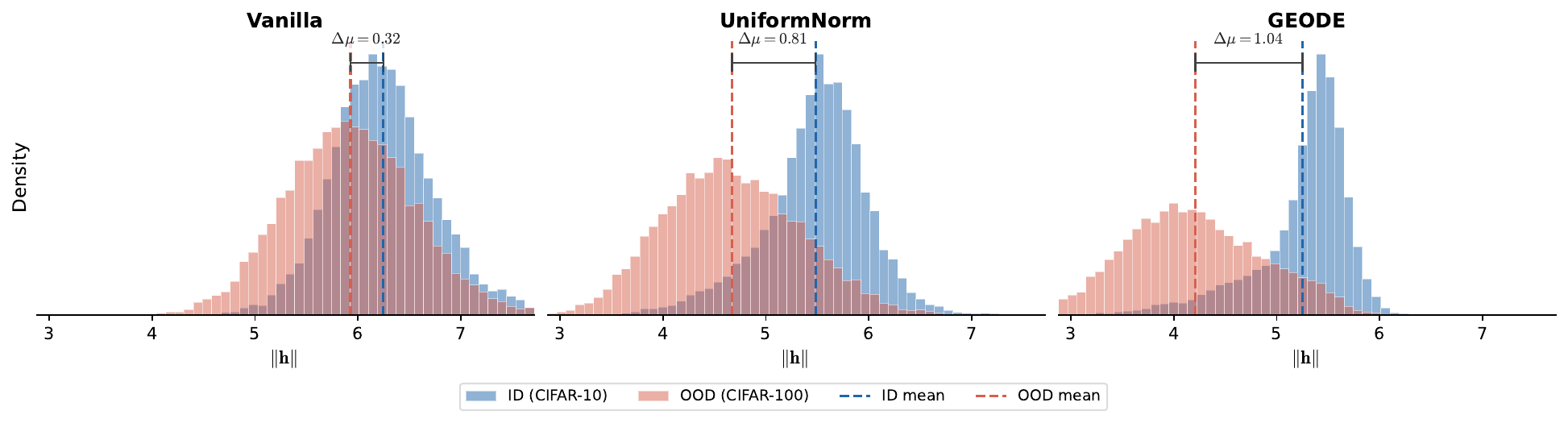}
\caption{Feature norm distributions for ID (CIFAR-10, blue) and near-OOD (CIFAR-100, red) under three training methods (ResNet-18). Vanilla: overlapping (gap $0.32$). UniformNorm: moderate separation (gap $0.81$). \method: best separation (gap $1.04$) with preserved OOD distribution shape.}
\label{fig:norm_dist}
\end{figure}

\subsection{Training Curves}
\label{app:curves}

\begin{figure}[h]
\centering
\includegraphics[width=\textwidth]{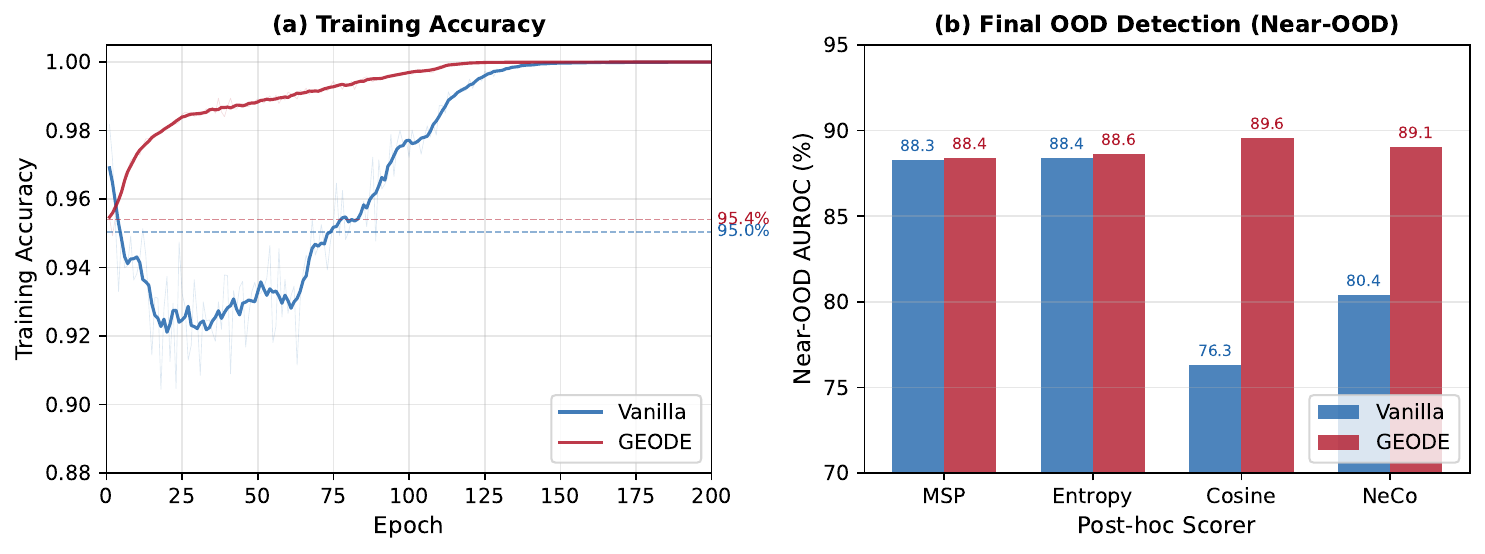}
\caption{Training dynamics on CIFAR-10 (ResNet-18, 200 epochs) for the \method \emph{base} variant ($\lambda_{\mathrm{id}}=0$). (a)~Per-epoch training accuracy (smoothed; raw values shown faintly). (b)~Near-OOD AUROC for four post-hoc scorers at convergence. The base variant tracks Vanilla on logit-based scorers and improves feature-based scorers; the contrastive variant ($\lambda_{\mathrm{id}}=0.5$) used in the main results (Table~\ref{tab:cifar10_nearood}) shifts further toward logit-side gains at a small KNN cost.}
\label{fig:training_curves}
\end{figure}

\subsection{Full Hyperparameter Sensitivity}
\label{app:sensitivity}

\begin{table}[h]
\centering
\caption{Hyperparameter sensitivity: near-OOD AUROC on CIFAR-10 as a function of $\alpha$ (ResNet-18, 200 epochs, $\rho=0.2$).}
\label{tab:sensitivity_c10}
\small
\begin{tabular}{@{}l cccc@{}}
\toprule
$\alpha$ & 0.2 & 0.3 & 0.5 & 0.7 \\
\midrule
MSP   & \textbf{88.95} & 88.41 & 88.16 & 88.45 \\
Entropy & \textbf{89.15} & 88.62 & 88.36 & 88.65 \\
Cos   & \textbf{89.87} & 89.56 & 89.68 & 89.66 \\
Euc   & \textbf{89.74} & 89.43 & 89.63 & 89.62 \\
NeCo  & \textbf{89.94} & 89.40 & 89.27 & 88.97 \\
\midrule
ID Acc & 95.30 & \textbf{95.46} & 95.33 & 95.43 \\
\bottomrule
\end{tabular}
\end{table}

\begin{table}[h]
\centering
\caption{Hyperparameter sensitivity: near-OOD AUROC on CIFAR-100 as a function of $\alpha$ (ResNet-18, 200 epochs, $\rho=0.2$).}
\label{tab:sensitivity_c100}
\small
\begin{tabular}{@{}l cccc@{}}
\toprule
$\alpha$ & 0.2 & 0.3 & 0.5 & 0.7 \\
\midrule
MSP   & 78.74 & 78.59 & \textbf{79.11} & 78.51 \\
Entropy & 79.45 & 79.26 & \textbf{79.78} & 79.25 \\
Cos   & 78.07 & 77.90 & \textbf{78.14} & 78.04 \\
Euc   & 75.27 & 75.04 & \textbf{75.19} & 75.23 \\
NeCo  & 75.11 & 75.06 & \textbf{75.73} & 75.20 \\
\midrule
ID Acc & \textbf{77.51} & \textbf{77.51} & 77.32 & 77.54 \\
\bottomrule
\end{tabular}
\end{table}

\subsection{CIFAR-100 Detailed Results}
\label{app:cifar100}

\begin{table}[h]
\centering
\caption{Per-dataset CIFAR-100 results for \method (ResNet-18, 200 epochs). AUROC ($\uparrow$) / FPR@95 ($\downarrow$).}
\label{tab:cifar100_detailed}
\small
\begin{tabular}{@{}l cc cc cc cc@{}}
\toprule
& \multicolumn{2}{c}{MSP} & \multicolumn{2}{c}{Energy} & \multicolumn{2}{c}{ODIN} & \multicolumn{2}{c}{KNN} \\
\cmidrule(lr){2-3} \cmidrule(lr){4-5} \cmidrule(lr){6-7} \cmidrule(lr){8-9}
Dataset & AUC & FPR & AUC & FPR & AUC & FPR & AUC & FPR \\
\midrule
\multicolumn{9}{l}{\textit{Near-OOD}} \\
CIFAR-10   & 78.45 & 58.67 & 79.17 & 59.21 & 77.82 & 59.89 & 77.63 & 69.69 \\
TIN        & 82.41 & 49.82 & 83.22 & 49.11 & 81.66 & 52.39 & 83.80 & 48.77 \\
\textit{Average} & 80.43 & 54.24 & 81.19 & 54.16 & 79.74 & 56.14 & 80.71 & 59.23 \\
\midrule
\multicolumn{9}{l}{\textit{Far-OOD}} \\
MNIST      & 78.75 & 50.83 & 80.77 & 48.00 & 86.59 & 40.14 & 84.15 & 43.46 \\
SVHN       & 82.54 & 49.36 & 84.49 & 44.39 & 78.86 & 55.56 & 88.78 & 41.39 \\
Textures   & 77.70 & 60.22 & 79.58 & 59.89 & 79.71 & 61.07 & 82.06 & 58.70 \\
Places365  & 79.82 & 55.17 & 80.48 & 54.58 & 79.99 & 56.51 & 80.77 & 57.27 \\
\textit{Average} & 79.70 & 53.89 & 81.33 & 51.71 & 81.29 & 53.32 & 83.94 & 50.20 \\
\bottomrule
\end{tabular}
\end{table}

\subsection{Computational Cost}
\label{app:cost}

\begin{table}[h]
\centering
\caption{Training wall-clock time (seconds per epoch / total hours) on a single RTX 2080 Ti GPU with batch size 256. \method adds negligible overhead vs.\ Vanilla in our measurements; the per-epoch numbers are noisy because of shared-cluster I/O variance, which on CIFAR-10 makes \method appear marginally faster than Vanilla. The CIFAR-100 row, where the two methods were measured back-to-back, shows the true overhead is within $\pm 2\%$. All methods train for 200 epochs.}
\label{tab:cost}
\begin{tabular}{@{}lcc@{}}
\toprule
Method & CIFAR-10 & CIFAR-100 \\
\midrule
Vanilla (CE only)  & 56 s/ep (3.1 h) & 69 s/ep (3.8 h) \\
\method (ours)     & 42 s/ep (2.4 h) & 68 s/ep (3.8 h) \\
WRN + \method (ours) & 91 s/ep (5.1 h) & 95 s/ep (5.3 h) \\
\bottomrule
\end{tabular}
\end{table}

\subsection{Gradient Analysis}
\label{app:gradients}

We analyze how the \method OOD norm loss $\mathcal{L} = (\norm{\mathbf{h}} - t)^2$ with adaptive target $t = s^2 \cdot \alpha \cdot r_{\mathrm{id}}$ shapes the feature space, where $s = \max_c \cosim(\mathbf{h}, \boldsymbol{\mu}_c)$ and we write $\mathbf{h} = \mathbf{h}^{\mathrm{ood}}$ for brevity. Let $r = \norm{\mathbf{h}}$, $\hat{\mathbf{h}} = \mathbf{h}/r$, and let $c^* = \arg\max_c \cosim(\mathbf{h}, \boldsymbol{\mu}_c)$ denote the nearest class mean. Define the unit class mean direction $\hat{\boldsymbol{\mu}} = \boldsymbol{\mu}_{c^*}/\norm{\boldsymbol{\mu}_{c^*}}$.

The gradient decomposes as $\partial \mathcal{L}/\partial \mathbf{h} = \mathbf{g}_{\mathrm{rad}} + \mathbf{g}_{\mathrm{tan}}$, where:
\begin{align}
    \mathbf{g}_{\mathrm{rad}} &= 2(r - t)\,\hat{\mathbf{h}}, \label{eq:grad_radial} \\
    \mathbf{g}_{\mathrm{tan}} &= -\frac{4s(r - t) \cdot \alpha \cdot r_{\mathrm{id}}}{r\norm{\boldsymbol{\mu}_{c^*}}} \left(\boldsymbol{\mu}_{c^*} - s\norm{\boldsymbol{\mu}_{c^*}}\,\hat{\mathbf{h}}\right). \label{eq:grad_tangential}
\end{align}
The radial component $\mathbf{g}_{\mathrm{rad}}$ adjusts the feature norm toward the target $t$. The tangential component $\mathbf{g}_{\mathrm{tan}}$ rotates the feature direction; when $\mathbf{h}$ is aligned with a class mean ($s \to 1$), $\mathbf{g}_{\mathrm{tan}} \to \mathbf{0}$, so the loss gradient is purely radial near class means, preserving angular arrangement while regularizing norms.

%% ========================================================
\section{Design Intuition}
\label{app:analysis_details}

The adaptive target connects the OE geometric analysis (Section~\ref{sec:oe_analysis}) to the neural collapse framework: under NC, ID features converge to class means on a simplex ETF, so conditioning the OOD norm target on cosine similarity to these means respects the angular structure that defines class boundaries. Near-boundary OOD features ($s \to 1$) retain meaningful norm, preserving local distance relationships that KNN relies on, while far-from-class features ($s \to 0$) are pushed to the origin for logit-based discrimination. The contrastive margin amplifies this by pushing ID norms up (Eq.~\ref{eq:id_norm}), placing the ID--OOD boundary at the Q4 region identified in the OE quartile analysis (Table~\ref{tab:oe_quartile}).

%% ========================================================
\section{Future Work}
\label{app:future}

Three directions are promising: (i) synthetic generation strategies that directly target the boundary-calibration locus identified by the OE quartile analysis; (ii) extending the composability framework to other training methods beyond OE; and (iii) investigating whether the formal score-family tradeoff (Proposition~\ref{prop:tradeoff}; see also \citet{wang2025impossibility} for an independent analysis) can guide the design of training losses that are provably optimal for simultaneous near- and far-OOD detection.

\end{document}